\pdfoutput=1

\documentclass{article}

\usepackage{graphicx}
\usepackage{amsmath}
\usepackage{amssymb}
\usepackage{booktabs}
\usepackage{enumitem} %
\usepackage{booktabs} %
\usepackage{multirow} %
\usepackage{amssymb}%
\usepackage{pifont}%
\usepackage{transparent}
\usepackage{amsthm}   %

\usepackage{aliascnt}
\newtheorem{definition}{Definition}

\newaliascnt{lemma}{definition}
\newtheorem{lemma}[lemma]{Lemma}
\aliascntresetthe{lemma}

\newaliascnt{proposition}{definition}

\aliascntresetthe{proposition}

\newaliascnt{corollary}{definition}
\newtheorem{corollary}[corollary]{Corollary}
\aliascntresetthe{corollary}

\theoremstyle{definition}
\newaliascnt{remark}{definition}
\newtheorem{remark}[remark]{Remark}
\aliascntresetthe{remark}
\theoremstyle{plain}

\usepackage{hyperref}

\usepackage[preprint]{icml2026}
\hypersetup{pdftitle={LiBRA: Logit Fingerprints for Byzantine-Robust Client Selection in Federated Learning}, pdfsubject={}}

\renewcommand{\algorithmiccomment}[1]{\hfill$\triangleright$ #1}

\usepackage[capitalize]{cleveref}

\crefname{lemma}{Lemma}{Lemmas}
\crefname{proposition}{Proposition}{Propositions}
\crefname{corollary}{Corollary}{Corollaries}
\crefname{remark}{Remark}{Remarks}
\crefname{definition}{Definition}{Definitions}
\crefname{section}{Sec.}{Secs.}
\Crefname{section}{Section}{Sections}
\Crefname{table}{Table}{Tables}
\crefname{table}{Table}{Tables}

\def\algoname{LiBRA} %

\begin{document}

\icmltitlerunning{LiBRA: Logit Fingerprints for Byzantine-Robust Client Selection in Federated Learning}

\twocolumn[
  \icmltitle{\textsc{LiBRA}: Logit Fingerprints for Byzantine-Robust Client Selection in Federated Learning}

  \begin{icmlauthorlist}
    \icmlauthor{Saheed A. Tijani}{deakin}
    \icmlauthor{Frank Jiang}{deakin}
    \icmlauthor{Ming Liu}{deakin}
  \end{icmlauthorlist}
  \icmlaffiliation{deakin}{Centre for Cyber Security Research and Innovation (CSRI), Deakin University, Geelong, Australia}
  \icmlcorrespondingauthor{\mbox{Frank Jiang}}{frank.jiang@deakin.edu.au}
  \icmlcorrespondingauthor{\mbox{Saheed A. Tijani}}{stijani@deakin.edu.au}

  \icmlkeywords{Federated Learning, Byzantine Robustness, Client Selection}

  \vskip 0.3in
]

\printAffiliationsAndNotice{}

 \begin{abstract}
 Byzantine-robust aggregation shields federated learning from malicious updates, but the rules that filter out suspect updates do so by comparing clients' parameter vectors. That comparison requires access to every client's update, which secure aggregation exists to withhold, costs \(O(n^{2} d)\) in the parameter dimension \(d\), and is defined only between clients that share an architecture. We propose LiBRA, which compares clients on what their models compute rather than on the parameters that compute them, using a \emph{fingerprint} of \(C^{2}\) values for \(C\) classes, the class-conditional logit response of each client's model to pseudo-patterns optimised against it. We instantiate LiBRA as variants of Krum, Multi-Krum and Bulyan that change only what the dissimilarity score is computed on. On Fashion-MNIST and CIFAR-10 with three architectures and four backdoor and label-manipulation attacks under i.i.d.\ partitions, each variant holds every attack its parent holds, and on identical client updates under a backdoor attack the fingerprint ranks malicious clients within 0.006 AUC of the full parameter vector. The variants thus match the robustness of the rules they extend, while their trust decision reads no client parameter, compares up to 49{,}090 times fewer values, and can by construction compare clients whose architectures differ, at the cost of one pattern optimisation per client per round.

 \end{abstract}

\newcommand{\cmark}{\ding{51}}%
\newcommand{\xmark}{\ding{55}}%

\newcommand{\OverheadTable}{%
\begin{table*}[ht]
    \centering
    \small
    \caption{Per-round overhead of the logit substitution, measured from the runs this work reports. Timings are the median over the runs listed, all on one NVIDIA RTX A4000, and are never pooled across accelerator types. The same optimisation takes 18.8\,s on an A4000 and 10.0\,s on an A100, so a figure pooled over both describes no machine. Each client performs \emph{one} pseudo-pattern optimisation per round, in parallel with the other clients, so a round's critical path grows by one optimisation however many clients take part. Selection cost is stated in elements compared per client pair, which is the trust decision alone. The aggregation that follows is unchanged and is not counted.}
    \label{tab:overhead}
    \begin{tabular}{lccc}
    \toprule
    \textbf{Component} & \textbf{Fashion-MNIST} & \textbf{CIFAR-10 (CNN)} & \textbf{CIFAR-10 (ResNet-9)} \\
    \midrule
    \multicolumn{4}{l}{\emph{Client-side, per client per round (RTX A4000)}} \\
    Local training (10 epochs) & 1.25s & 4.96s & 24.11s \\
    Pseudo-pattern optimisation (1000 iterations) & 2.71s & 6.83s & 18.77s \\
    \quad as a share of local training & 217\% & 138\% & 78\% \\
    Runs behind the medians & 5 & 4 & 27 \\
    \midrule
    \multicolumn{4}{l}{\emph{Communication, per client per round}} \\
    Model parameters & 0.12 MB & 2.21 MB & 19.64 MB \\
    Fingerprint ($C^{2}=100$ floats) & 0.0004 MB & 0.0004 MB & 0.0004 MB \\
    \quad as a share of the parameter payload & 0.34\% & 0.018\% & 0.002\% \\
    \midrule
    \multicolumn{4}{l}{\emph{Server-side selection, elements compared per client pair}} \\
    Krum (parameters), $d$ & $29{,}130$ & $552{,}362$ & $4{,}909{,}002$ \\
    {\algoname} (fingerprints), $C^{2}$ & $100$ & $100$ & $100$ \\
    \quad reduction factor & $291\times$ & $5{,}524\times$ & $49{,}090\times$ \\
    \bottomrule
    \end{tabular}
\end{table*}
}

\newcommand{\krumVsKrumLite}{%
\begin{table*}[ht] %
    \small
    \setlength{\tabcolsep}{4pt}
    \caption{{\algoname} against the rules it adapts, by construction rather than measurement. A \checkmark$^{*}$ marks resilience inherited from the parent directly when the fingerprint selects, in every round, the clients the parameter distance would, and \Cref{lem:order} gives the separation under which fingerprint selection admits no Byzantine client. \emph{Parameter-free selection} decides which clients to admit without reading a client parameter. \emph{Architecture-agnostic selection} needs only a shared label space and input shape, since a fingerprint is $C \times C$ for every model, although no mixed-architecture cohort is evaluated.}
    \label{tab:krum-vs-krum-lite}
    \begin{tabular}{lcccc}
    \toprule
    \textbf{Algorithm} & \begin{tabular}{@{}c@{}}\textbf{Byzantine}\\\textbf{resilient}\end{tabular} & \begin{tabular}{@{}c@{}}\textbf{Parameter-free}\\\textbf{selection}\end{tabular} & \begin{tabular}{@{}c@{}}\textbf{Architecture-}\\\textbf{agnostic selection}\end{tabular} & \begin{tabular}{@{}c@{}}\textbf{Selection}\\\textbf{complexity}\end{tabular} \\
    \midrule
    (Multi-)Krum & \checkmark & \xmark & \xmark & \( O(n^2 d + m\, n^2 \log n) \) \\
    Bulyan       & \checkmark & \xmark & \xmark & \( O(n^2 d + \theta\, n^2 \log n) \) \\
    \textbf{(Multi-)Krum-Lite} & \checkmark$^{*}$ & \checkmark & \checkmark & \( O(n^2 C^{2} + m\, n^2 \log n) \) \\
    \textbf{Bulyan-Lite} & \checkmark$^{*}$ & \checkmark & \checkmark & \( O(n^2 C^{2} + \theta\, n^2 \log n) \) \\
    \bottomrule
    \end{tabular}
\end{table*}
}

\newcommand{\SelectionWallClock}{%
\begin{table*}[ht]
    \centering
    \small
    \setlength{\tabcolsep}{4pt}
    \caption{Selection time for one round of 10 clients, 90 pairwise comparisons, on an NVIDIA RTX A4000. Each figure is the median of three benchmark runs of 100 repetitions each. {\algoname} compares the same 100-value fingerprint for all three architectures, so it has one row. The last two columns are the point of the table. The element count falls by two to five orders of magnitude while the clock falls by less than one, because at $n=10$ a comparison is dominated by a fixed per-call cost rather than by the elements it touches.}
    \label{tab:wallclock}
    \begin{tabular}{lrrrr}
    \toprule
    \textbf{Rule} & \textbf{Elements} & \textbf{Time} & \textbf{Element} & \textbf{Measured} \\
                  & \textbf{per pair} & \textbf{(ms)} & \textbf{ratio}   & \textbf{speedup} \\
    \midrule
    Krum, Fashion-MNIST CNN & $29{,}130$    & 57.7  & ---              & --- \\
    Krum, CIFAR-10 CNN      & $552{,}362$   & 155.5 & ---              & --- \\
    Krum, ResNet-9          & $4{,}909{,}002$ & 242.3 & ---            & --- \\
    \midrule
    {\algoname}, any of the three & $100$   & 29.7  & ---              & --- \\
    \midrule
    \quad against Fashion-MNIST CNN & & & $291\times$      & $1.9\times$ \\
    \quad against CIFAR-10 CNN      & & & $5{,}524\times$  & $5.2\times$ \\
    \quad against ResNet-9          & & & $49{,}090\times$ & $8.2\times$ \\
    \bottomrule
    \end{tabular}
\end{table*}
}

\providecommand{\ExperimentalResults}{}%
\renewcommand{\ExperimentalResults}{%
\begin{table*}[ht!]
    \centering
    \small
    \setlength{\tabcolsep}{3pt}
    \caption{Clean accuracy and attack success (\%) for every rule and configuration under i.i.d.\ partitions. Each value is the median over three repetitions of the mean over the final 10 rounds, $\pm$ the median of the within-run standard deviations. Attack success is backdoor accuracy under a pixel trigger and malicious-task accuracy under a label attack. Bold rows are the {\algoname} variants. \dag\ marks repeats that disagree by more than 5 points. \ddag\ marks a configuration where the attack stays below 40\% even with no defence.}
    \label{tab:ExperimentalResults}
    \begin{tabular}{lcccccc}
    \toprule
    \multirow{2}{*}{\textbf{Aggregator}}
      & \multicolumn{2}{c}{\textbf{Fashion-MNIST}}
      & \multicolumn{2}{c}{\textbf{CIFAR-10 (CNN)}}
      & \multicolumn{2}{c}{\textbf{CIFAR-10 (ResNet-9)}} \\
    \cmidrule(lr){2-3} \cmidrule(lr){4-5} \cmidrule(lr){6-7}
    & Clean accuracy & Attack
    & Clean accuracy & Attack
    & Clean accuracy & Attack \\
    \midrule

    &  &  & \textbf{1-pixel attacks} &  &  & \\
    \midrule
    FedAvg                   &
                             89.87$\pm$0.12 & 93.95$\pm$0.25 &
                             82.02$\pm$0.19 & 98.35$\pm$0.19 &
                             82.84$\pm$0.21 & 99.91$\pm$0.02 \\
    Krum                     &
                             87.58$\pm$0.29 & 9.92$\pm$0.13 &
                             71.41$\pm$0.73 & 10.57$\pm$0.63 &
                             70.69$\pm$0.86 & 10.87$\pm$0.62 \\
    \textbf{Krum-Lite}       &
                             \textbf{88.08$\pm$0.33} & \textbf{9.88$\pm$0.20} &
                             \textbf{76.85$\pm$0.71} & \textbf{9.92$\pm$0.56} &
                             \textbf{74.91$\pm$1.15} & \textbf{10.26$\pm$0.64} \\
    Multi-Krum               &
                             89.02$\pm$0.12 & 10.07$\pm$0.09 &
                             77.37$\pm$0.32 & 10.37$\pm$0.31 &
                             77.21$\pm$0.42 & 10.66$\pm$0.43 \\
    \textbf{Multi-Krum-Lite} &
                             \textbf{89.28$\pm$0.17} & \textbf{9.91$\pm$0.07} &
                             \textbf{81.10$\pm$0.34} & \textbf{10.29$\pm$0.28} &
                             \textbf{81.10$\pm$0.31} & \textbf{10.25$\pm$0.47} \\
    Bulyan                   &
                             89.38$\pm$0.11 & 10.09$\pm$0.08 &
                             71.88$\pm$0.30\dag & 52.79$\pm$4.89\dag &
                             77.69$\pm$0.27 & 10.67$\pm$0.30 \\
    \textbf{Bulyan-Lite}     &
                             \textbf{88.81$\pm$0.20} & \textbf{9.87$\pm$0.09} &
                             \textbf{74.88$\pm$0.65} & \textbf{10.18$\pm$0.74} &
                             \textbf{76.55$\pm$1.29} & \textbf{10.54$\pm$0.79} \\
    \midrule

    &  &  & \textbf{9-pixel attacks} &  &  & \\
    \midrule
    FedAvg                   &
                             89.97$\pm$0.09 & 99.17$\pm$0.12 &
                             82.16$\pm$0.18 & 99.67$\pm$0.05 &
                             82.45$\pm$0.23 & 99.96$\pm$0.01 \\
    Krum                     &
                             87.61$\pm$0.22 & 9.86$\pm$0.26 &
                             70.99$\pm$1.07 & 10.03$\pm$0.85 &
                             70.73$\pm$1.16 & 10.78$\pm$0.56 \\
    \textbf{Krum-Lite}       &
                             \textbf{88.25$\pm$0.35} & \textbf{9.67$\pm$0.19} &
                             \textbf{76.16$\pm$1.00} & \textbf{8.71$\pm$0.89} &
                             \textbf{74.15$\pm$0.86} & \textbf{9.07$\pm$0.75} \\
    Multi-Krum               &
                             88.88$\pm$0.20 & 9.68$\pm$0.19 &
                             77.77$\pm$0.32 & 10.11$\pm$0.35 &
                             77.56$\pm$0.28 & 10.70$\pm$0.43 \\
    \textbf{Multi-Krum-Lite} &
                             \textbf{89.25$\pm$0.12} & \textbf{9.47$\pm$0.12} &
                             \textbf{81.28$\pm$0.32} & \textbf{8.48$\pm$0.38} &
                             \textbf{81.58$\pm$0.46} & \textbf{9.45$\pm$0.29} \\
    Bulyan                   &
                             89.05$\pm$0.13 & 10.49$\pm$0.13 &
                             72.46$\pm$0.28\dag & 13.56$\pm$0.98\dag &
                             77.26$\pm$0.50 & 10.75$\pm$0.42 \\
    \textbf{Bulyan-Lite}     &
                             \textbf{88.86$\pm$0.21} & \textbf{9.93$\pm$0.09} &
                             \textbf{74.48$\pm$0.44\dag} & \textbf{10.04$\pm$0.80\dag} &
                             \textbf{77.69$\pm$1.21} & \textbf{9.44$\pm$0.62} \\
    \midrule

    &  &  & \textbf{LabelFlip attacks} &  &  & \\
    \midrule
    FedAvg                   &
                             67.88$\pm$11.00 & 62.54$\pm$26.68 &
                             60.13$\pm$1.43 & 64.94$\pm$2.70 &
                             60.92$\pm$1.02 & 63.55$\pm$2.55 \\
    Krum                     &
                             87.67$\pm$0.30 & 6.02$\pm$0.75 &
                             71.86$\pm$0.71 & 11.04$\pm$1.39 &
                             69.82$\pm$0.90 & 12.73$\pm$1.34 \\
    \textbf{Krum-Lite}       &
                             \textbf{88.18$\pm$0.39} & \textbf{5.76$\pm$0.70} &
                             \textbf{76.27$\pm$0.67} & \textbf{8.24$\pm$1.08} &
                             \textbf{74.26$\pm$1.05} & \textbf{8.49$\pm$1.23} \\
    Multi-Krum               &
                             89.03$\pm$0.15 & 5.35$\pm$0.42 &
                             76.67$\pm$0.37 & 9.93$\pm$0.64 &
                             77.49$\pm$0.47 & 11.27$\pm$0.70 \\
    \textbf{Multi-Krum-Lite} &
                             \textbf{89.23$\pm$0.13} & \textbf{5.08$\pm$0.38} &
                             \textbf{81.59$\pm$0.23} & \textbf{5.89$\pm$0.51} &
                             \textbf{81.04$\pm$0.41} & \textbf{5.62$\pm$0.70} \\
    Bulyan                   &
                             89.13$\pm$0.11 & 5.18$\pm$0.33 &
                             72.76$\pm$0.31 & 12.78$\pm$1.37 &
                             77.09$\pm$0.35 & 11.29$\pm$0.73 \\
    \textbf{Bulyan-Lite}     &
                             \textbf{88.63$\pm$0.30} & \textbf{4.89$\pm$0.62} &
                             \textbf{74.19$\pm$0.44} & \textbf{8.33$\pm$1.05} &
                             \textbf{76.73$\pm$0.88} & \textbf{7.75$\pm$1.01} \\
    \midrule

    &  &  & \textbf{Targeted attacks} &  &  & \\
    \midrule
    FedAvg                   &
                             84.21$\pm$0.64 & 74.98$\pm$6.89 &
                             76.82$\pm$0.49 & 46.64$\pm$6.12 &
                             77.32$\pm$0.58 & 29.07$\pm$2.12\ddag \\
    Krum                     &
                             87.58$\pm$0.31 & 10.07$\pm$2.43 &
                             71.08$\pm$0.58 & 13.42$\pm$2.41 &
                             70.44$\pm$0.97 & 13.65$\pm$2.51 \\
    \textbf{Krum-Lite}       &
                             \textbf{88.31$\pm$0.36} & \textbf{10.54$\pm$2.85} &
                             \textbf{75.97$\pm$0.78} & \textbf{14.18$\pm$4.08} &
                             \textbf{74.68$\pm$1.32} & \textbf{15.21$\pm$6.40} \\
    Multi-Krum               &
                             88.80$\pm$0.11 & 10.09$\pm$1.09 &
                             77.54$\pm$0.23 & 16.31$\pm$1.73 &
                             77.98$\pm$0.31 & 12.59$\pm$1.55 \\
    \textbf{Multi-Krum-Lite} &
                             \textbf{89.26$\pm$0.18} & \textbf{9.13$\pm$1.40} &
                             \textbf{81.09$\pm$0.23} & \textbf{12.00$\pm$1.92} &
                             \textbf{80.65$\pm$0.44} & \textbf{10.37$\pm$2.14} \\
    Bulyan                   &
                             89.28$\pm$0.11 & 10.48$\pm$1.28 &
                             72.30$\pm$0.39 & 14.30$\pm$1.47 &
                             78.12$\pm$0.40 & 15.45$\pm$1.82 \\
    \textbf{Bulyan-Lite}     &
                             \textbf{88.93$\pm$0.25} & \textbf{6.23$\pm$1.15} &
                             \textbf{74.67$\pm$0.68} & \textbf{17.41$\pm$3.62} &
                             \textbf{77.47$\pm$1.00} & \textbf{19.98$\pm$3.49} \\
    \bottomrule
    \end{tabular}
\end{table*}
}

\providecommand{\NoFilterBaselines}{}%
\renewcommand{\NoFilterBaselines}{%
\begin{table*}[ht!]
    \centering
    \small
    \setlength{\tabcolsep}{4pt}
    \caption{The coordinate-wise median and the trimmed mean, the two rules that combine every update without selecting, under i.i.d.\ partitions. Each value is the median over three repetitions of the mean over the final 10 rounds, $\pm$ the median of the within-run standard deviations. Neither resists a pixel trigger in any configuration, so robustness in the benchmark comes from selecting clients.}
    \label{tab:NoFilterBaselines}
    \begin{tabular}{lcccccc}
    \toprule
    \multirow{2}{*}{\textbf{Rule}}
      & \multicolumn{2}{c}{\textbf{Fashion-MNIST}}
      & \multicolumn{2}{c}{\textbf{CIFAR-10 (CNN)}}
      & \multicolumn{2}{c}{\textbf{CIFAR-10 (ResNet-9)}} \\
    \cmidrule(lr){2-3} \cmidrule(lr){4-5} \cmidrule(lr){6-7}
    & Clean accuracy & Attack & Clean accuracy & Attack & Clean accuracy & Attack \\
    \midrule

    &  &  & \textbf{1-pixel attacks} &  &  & \\
    \midrule
    Median         & 89.57$\pm$0.12 & 85.79$\pm$1.58 & 78.15$\pm$0.22 & 89.95$\pm$0.78 & 81.78$\pm$0.24 & 95.59$\pm$0.56 \\
    Trimmed mean   & 89.51$\pm$0.11 & 87.82$\pm$1.25 & 81.94$\pm$0.23 & 98.66$\pm$0.16 & 82.86$\pm$0.24 & 99.84$\pm$0.04 \\
    \midrule

    &  &  & \textbf{9-pixel attacks} &  &  & \\
    \midrule
    Median         & 89.64$\pm$0.10 & 98.76$\pm$0.31 & 78.18$\pm$0.28 & 97.26$\pm$0.32 & 81.54$\pm$0.19 & 98.57$\pm$0.22 \\
    Trimmed mean   & 89.95$\pm$0.07 & 98.23$\pm$0.25 & 82.25$\pm$0.19 & 99.62$\pm$0.07 & 82.68$\pm$0.21 & 99.97$\pm$0.02 \\
    \midrule

    &  &  & \textbf{LabelFlip attacks} &  &  & \\
    \midrule
    Median         & 88.56$\pm$0.22 & 10.58$\pm$0.50 & 75.34$\pm$0.30 & 17.30$\pm$0.75 & 79.54$\pm$0.31 & 20.20$\pm$0.79 \\
    Trimmed mean   & 87.55$\pm$0.33 & 14.19$\pm$0.88 & 78.53$\pm$0.50 & 20.70$\pm$0.96 & 80.32$\pm$0.44 & 20.50$\pm$1.14 \\
    \midrule

    &  &  & \textbf{Targeted attacks} &  &  & \\
    \midrule
    Median         & 89.49$\pm$0.16 & 21.36$\pm$1.47 & 76.81$\pm$0.35 & 21.57$\pm$2.15 & 80.50$\pm$0.40 & 17.60$\pm$1.18 \\
    Trimmed mean   & 88.95$\pm$0.23 & 25.36$\pm$2.33 & 81.52$\pm$0.25 & 25.88$\pm$1.81 & 82.30$\pm$0.31 & 17.04$\pm$1.21 \\
    \bottomrule
    \end{tabular}
\end{table*}
}

\providecommand{\ResultsDelta}{}%
\renewcommand{\ResultsDelta}{%
\begin{table}[ht!]
    \centering
    \footnotesize
    \setlength{\tabcolsep}{3pt}
    \caption{Paired differences between each {\algoname} variant and the rule it replaces. Each entry is a median, with the number of configurations in which the variant leads as a subscript. Higher is better for clean accuracy, lower for attack success.}
    \label{tab:ResultsDelta}
    \begin{tabular}{lccc}
    \toprule
    \textbf{Variant} & \textbf{Cfgs} & \textbf{Clean} & \textbf{Attack success} \\
    \midrule
    Krum-Lite       & 11 & $+4.22_{11}$ & $-0.61_{9}$ \\
    Multi-Krum-Lite & 11 & $+3.55_{11}$ & $-0.96_{11}$ \\
    Bulyan-Lite     & 11 & $-0.19_{5}$ & $-1.31_{10}$ \\
    \bottomrule
    \end{tabular}
\end{table}
}

\providecommand{\AdmissionTable}{}%
\renewcommand{\AdmissionTable}{%
\begin{table}[ht!]
    \centering
    \footnotesize
    \setlength{\tabcolsep}{2pt}
    \caption{How often each rule admitted an adversary, by attack. An entry is the number of rounds in which at least one of the two adversaries present was admitted, of the rounds compared, over the final 10 rounds of every run. Lower is better and zero is perfect. Bolded rows are the variants of \Cref{sec:Applications}. Federated averaging, the coordinate-wise median and the trimmed mean are omitted, since they combine every update and exclude no client.}
    \label{tab:Admission}
    \begin{tabular}{lcccc}
    \toprule
    \textbf{Aggregator} & \textbf{1-pixel} & \textbf{9-pixel} & \textbf{LabelFlip} & \textbf{Targeted} \\
    \midrule
    Krum                      & 0/90 & 0/90 & 0/90 & 0/90 \\
    \textbf{Krum-Lite}        & 0/90 & 0/90 & 0/90 & 0/90 \\
    Multi-Krum                & 0/90 & 0/90 & 0/90 & 0/90 \\
    \textbf{Multi-Krum-Lite}  & 0/90 & 0/90 & 0/90 & 0/90 \\
    Bulyan                    & 24/90 & 5/90 & 0/90 & 0/90 \\
    \textbf{Bulyan-Lite}      & 0/90 & 4/90 & 0/90 & 1/90 \\
    \bottomrule
    \end{tabular}
\end{table}
}

\providecommand{\AblationAUC}{}
\providecommand{\AblationPrecision}{}
\providecommand{\AblationDelta}{}
\providecommand{\AblationDeltaInner}{}
\providecommand{\AblationSpread}{}
\providecommand{\PENDING}[1]{\textbf{[PENDING: #1]}}
\providecommand{\DEGEN}{\textit{n/a}}

\renewcommand{\AblationDelta}{%
\begin{table}[t]
\centering\small
\setlength{\tabcolsep}{4pt}
\caption{Change in detection AUC from the full parameter vector to the logit fingerprint, positive favouring the fingerprint. Median over 3 repeats of each configuration. \dag\ marks a range across repeats above 0.05.}
\label{tab:ablation-delta-p3}
\begin{tabular}{lccc}
\toprule
Selection & Fashion-MNIST & CIFAR-10 & ResNet-9 \\
\midrule
admits 1 of 10 & +0.000 & -0.006 & -0.003 \\
admits 3 of 10 & +0.000 & +0.006 & +0.000 \\
\bottomrule
\end{tabular}
\end{table}
}

\renewcommand{\AblationDeltaInner}{%
\centering\small
\setlength{\tabcolsep}{4pt}
\captionof{table}{Change in detection AUC from the full parameter vector to the logit fingerprint, positive favouring the fingerprint. Median over 3 repeats of each configuration. \dag\ marks a range across repeats above 0.05.}
\label{tab:ablation-delta-p3}
\begin{tabular}{lccc}
\toprule
Selection & Fashion-MNIST & CIFAR-10 & ResNet-9 \\
\midrule
admits 1 of 10 & +0.000 & -0.006 & -0.003 \\
admits 3 of 10 & +0.000 & +0.006 & +0.000 \\
\bottomrule
\end{tabular}
}

\renewcommand{\AblationAUC}{%
\begin{table*}[t]
\centering\small
\setlength{\tabcolsep}{4pt}
\caption{Detection AUC of each representation, scored on identical client updates in identical rounds. Only the logit representation drove selection. Median over 3 repeats of each configuration. Chance is 0.5. \dag\ marks a range across repeats above 0.05.}
\label{tab:ablation-auc-p3}
\begin{tabular}{lccccc}
\toprule
model & full params & rand.\ proj & penult. & logits & softmax \\
\midrule
\multicolumn{6}{l}{\emph{Selection:} admits 1 of 10 clients} \\
Fashion-MNIST & 1.000 & 0.982 & 0.998 & 1.000 & 0.702\dag \\
CIFAR-10 & 0.994 & 0.703\dag & 0.739 & 0.988 & 0.898 \\
ResNet-9 & 1.000 & 0.936\dag & 0.904\dag & 0.997 & 0.653\dag \\
\midrule
\multicolumn{6}{l}{\emph{Selection:} admits 3 of 10 clients} \\
Fashion-MNIST & 1.000 & 0.976 & 1.000 & 1.000 & 0.953 \\
CIFAR-10 & 0.993 & 0.724\dag & 0.752\dag & 0.999 & 0.895 \\
ResNet-9 & 1.000 & 0.983 & 0.876\dag & 1.000 & 0.551\dag \\
\bottomrule
\end{tabular}
\end{table*}
}

\renewcommand{\AblationPrecision}{%
\begin{table*}[t]
\centering\small
\setlength{\tabcolsep}{4pt}
\caption{Precision@$m$ of each representation, where $m$ is the admitted count of the row group. Median over 3 repeats of each configuration. Chance is 0.20, and \emph{lower} is better. \dag\ marks a range across repeats above 0.05.}
\label{tab:ablation-precision-p3}
\begin{tabular}{lccccc}
\toprule
model & full params & rand.\ proj & penult. & logits & softmax \\
\midrule
\multicolumn{6}{l}{\emph{Selection:} admits 1 of 10 clients} \\
Fashion-MNIST & 0.000 & 0.000 & 0.000 & 0.000 & 0.000 \\
CIFAR-10 & 0.000 & 0.060 & 0.040 & 0.000 & 0.000 \\
ResNet-9 & 0.000 & 0.000 & 0.020 & 0.000 & 0.010 \\
\midrule
\multicolumn{6}{l}{\emph{Selection:} admits 3 of 10 clients} \\
Fashion-MNIST & 0.000 & 0.003 & 0.000 & 0.000 & 0.000 \\
CIFAR-10 & 0.000 & 0.090 & 0.053\dag & 0.000 & 0.000 \\
ResNet-9 & 0.000 & 0.000 & 0.033 & 0.000 & 0.023 \\
\bottomrule
\end{tabular}
\end{table*}
}

\renewcommand{\AblationSpread}{%
\begin{table}[t]
\centering\small
\setlength{\tabcolsep}{4pt}
\caption{Every cell whose detection AUC ranges by more than 0.05 across repeats, for each of the five measures, with its three repeat values in ascending order.}
\label{tab:ablation-spread-p3}
\begin{tabular}{llcc}
\toprule
model & measure & repeats & range \\
\midrule
\multicolumn{4}{l}{\emph{Selection:} admits 1 of 10 clients} \\
Fashion-MNIST & softmax & 0.681, 0.702, 0.848 & 0.167 \\
CIFAR-10 & rand.\ proj & 0.667, 0.703, 0.744 & 0.077 \\
ResNet-9 & rand.\ proj & 0.927, 0.936, 0.986 & 0.059 \\
ResNet-9 & penult. & 0.890, 0.904, 0.945 & 0.055 \\
ResNet-9 & softmax & 0.622, 0.653, 0.754 & 0.132 \\
\midrule
\multicolumn{4}{l}{\emph{Selection:} admits 3 of 10 clients} \\
CIFAR-10 & rand.\ proj & 0.700, 0.724, 0.750 & 0.0500 \\
CIFAR-10 & penult. & 0.629, 0.752, 0.900 & 0.271 \\
ResNet-9 & penult. & 0.771, 0.876, 0.936 & 0.165 \\
ResNet-9 & softmax & 0.451, 0.551, 0.632 & 0.181 \\
\bottomrule
\end{tabular}
\end{table}
}

\section{Introduction}
\label{sec:intro}
Federated learning keeps raw data on the client, but the updates a client
shares with the server are not inert. An honest-but-curious server holding
them can reconstruct training inputs by gradient inversion
\citep{geiping2020inverting, yin2021see, hatamizadeh2022gradvit}. Secure
aggregation limits that exposure by revealing only the sum of the updates
\citep{bonawitz2017practical}, and differential privacy by perturbing them so
that a curious server \citep{hu2020personalized} or an outside observer
\citep{wei2020federated} learns less from each. Byzantine-robust aggregation
pulls the other way, since the rules that exclude suspect clients compare
clients' parameter vectors pairwise and so need individual updates, which is
exactly what secure aggregation withholds.

Selecting benign clients by parameter distance, as Krum
\citep{blanchard2017machine} and Bulyan \citep{guerraoui2018hidden} do, has
three costs. It exposes clients, since even a participant who sees only the aggregate of the updates
can infer properties of other clients' data \citep{melis2019exploiting}, and a
rule that reads every update individually lets the server learn at least as
much. It is also expensive to compute, since comparing \(n\) clients
pairwise scales as \(O(n^{2} d)\) in a parameter dimension \(d\) that reaches
billions in current large models. Finally, it is restrictive, since a distance
between parameter vectors is defined only when clients share an architecture,
an assumption that heterogeneous deployments do not always meet
\citep{li2019fedmd, ye2023heterogeneous}.

All three costs trace to one decision, that clients are compared on what their
models \emph{are}. {\algoname} compares them on what their models
\emph{compute}. At every round each client draws random inputs, called
pseudo-patterns, for each class, optimises each pattern to raise the logit of
its class under the client's own model, and takes the per-class mean of the
model's logit response to them. We call the result the client's
\emph{fingerprint}. For a task with \(C\) classes it is a \(C \times C\)
matrix, one logit vector per class, so 100 values or 400 bytes in single
precision on a ten-class task, whatever the model behind it. It is computed
locally and sent alongside the client's update, which selection never reads.

So small a fingerprint can serve because Krum and the rules descended from it
admit clients by the \emph{order} of their distance-based scores alone, and an
order can survive a compression that distorts the distances themselves.
Selection therefore asks less of logits than their earlier use in federated
learning for knowledge transfer \citep{li2019fedmd, itahara2021distillation},
which needs them to carry what a model has learned, whereas selection needs
only that they rank clients as a parameter distance would. Closer to this
work, detectors that read model outputs or internal representations
\citep{wang2022flare, rieger2022deepsight, wang2025mitigating} probe with data
or random inputs, whereas {\algoname} optimises its pseudo-patterns against
each client's model and substitutes the distance between fingerprints
for the parameter distance inside an existing selection rule such as Krum
(\Cref{sec:related_work}).

We make three contributions.
\begin{itemize}[nosep]
\item \emph{A selection representation that reads no client parameter.} Each
client's fingerprint holds \(C^{2}\) values whatever the client's model, needs
no clean data, and can be compared across clients whose architectures differ.
\item \emph{Three lightweight variants.} We instantiate {\algoname} as
\emph{Krum-Lite}, \emph{Multi-Krum-Lite} and \emph{Bulyan-Lite}, which replace
the parameter distance of Krum and Bulyan with the fingerprint distance and
change nothing else. We prove that selection on either distance admits no
malicious client when malicious and benign clients are separated by a
sufficient margin (\Cref{lem:order}).
\item \emph{Evidence of parity in robustness.} Through end-to-end
benchmarks of each variant against its parent and an ablation on identical
client updates, we show that every variant holds every attack its parent holds
and that the fingerprint ranks malicious clients on a par with the parameter
vector.
\end{itemize}

This work sets out to establish one thing, that a client-generated fingerprint
of \(C^{2}\) logit values can \emph{substitute} for the full parameter vector
in Byzantine-robust client selection, admitting malicious clients no more often
in total than the parameter distance does. In all our experiments, clients
hold i.i.d.\ partitions, the setting in which the foundational Byzantine-robust
works prove their guarantees and run their evaluations
\citep{blanchard2017machine, guerraoui2018hidden, yin2018byzantine}, and each
cohort is architecturally homogeneous, since models of different architectures
cannot be averaged parameter by parameter, and aggregating them is a separate
problem. Adaptive
adversaries, such as those that report a forged fingerprint or that shape their
update so that its fingerprint still looks benign (\Cref{rem:nullspace}), are
left to future work.

\section{Background and Related Work}
\label{sec:related_work}
Federated averaging \citep{mcmahan2017communication}, still the most widely
used rule, admits every client, and its vulnerability to adversarial ones is
well documented \citep{manzoor2023defending, kumar2023impact}. Krum and
Multi-Krum \citep{blanchard2017machine} and Bulyan \citep{guerraoui2018hidden}
score each client against the others and discard the outliers, and Sniper
\citep{cao2019understanding} keeps the largest group of mutually close updates,
found as a maximum clique. The coordinate-wise median and trimmed mean
\citep{yin2018byzantine} and geometric-median aggregation
\citep{pillutla2022robust} instead combine the updates robustly without
excluding any client. The selection rules differ in their criterion, but they
agree on the quantity they score, the client parameter vector. FLTrust
\citep{cao2020fltrust} stays in parameter space too, training a server update
each round on a small clean training set, the root dataset, and weighting each
client's update by a trust score that falls as the update's direction departs
from the server's.

Reading a model through its responses to chosen inputs is an established idea.
Neural Cleanse \citep{wang2020neural} reverse-engineers a trigger for each
class from clean samples to expose a backdoor, inspecting a single model and
comparing none. Within federated learning, external data has stood in for
classes missing on a client \citep{tijani2021federated}, whereas the
pseudo-patterns of {\algoname} need no data and exist only to draw out the
responses the server compares.

Outputs have also been exchanged in place of parameters. Logits carry
knowledge in distillation \citep{hinton2015distilling}, and federated methods
send them instead of updates, DS-FL to cut communication
\citep{itahara2021distillation} and FedMD so that participants with differently
designed models can keep those designs to themselves \citep{li2019fedmd}. Logits
travel well because their width follows the task rather than the network.

A second line reads outputs and internal representations to tell clients
apart. FLARE \citep{wang2022flare}
weights updates by pairwise discrepancies between penultimate-layer
representations computed on auxiliary data, arguing that defences fixed on
model parameters miss carefully crafted poison. DeepSight
\citep{rieger2022deepsight} clusters clients on differences in the internal
structure and outputs of their updates before filtering. Closest to this work,
USD-FL \citep{wang2025mitigating} derives logits on a reserve dataset, compares
them pairwise under a 1-Wasserstein distance, and flags clients beyond a
threshold.

Taken together, the selection rules agree on comparing parameter vectors, and
the detectors establish that what a model computes can separate benign clients
from malicious ones, but they need data, take their width from the network, or
feed a decision step of their own. {\algoname} differs on each count. Its
pseudo-patterns are optimised against the model and need no data, unlike
Neural Cleanse, FLARE, USD-FL and FLTrust, which need clean samples, auxiliary
data, a reserve dataset and a root dataset respectively. DeepSight also avoids
data, by feeding random inputs to each model, but still reads the output-layer
parameters of every update, which {\algoname} never reads to decide. The
fingerprint of {\algoname} is a class-conditional mean of fixed width \(C \times C\), whatever
the number of pseudo-patterns and whatever the model, where FLARE's
representations take the width of the network's penultimate layer and grow in
number with the data they are computed on. And the distance between
fingerprints is substituted inside an existing selection rule, here Krum and
Bulyan, where USD-FL feeds a threshold, DeepSight a clustering step and FLARE a
trust weight. These differences give selection what the parameter distance
lacks, a comparison that reads no client parameter, compares \(C^{2}\) values
rather than \(d\), and is defined across architectures, while the rule that
consumes it stays unchanged.

\section{Methodology}
\label{sec:methodology}
{\algoname} changes one step of a federated round, the selection of the
clients whose updates are aggregated. It replaces the parameter distance on
which Krum, Multi-Krum and Bulyan rank clients with a distance between
\emph{fingerprints}, summaries of what each client's model computes rather
than of the parameters that compute it. The three variants built this way,
\emph{Krum-Lite}, \emph{Multi-Krum-Lite} and \emph{Bulyan-Lite}, take their
parents' aggregation unchanged, so each differs from its parent in selection
alone. We follow the standard federated setting
\citep{kairouz2021advances}. Each round the server draws \(n\) clients, at most
\(f\) of them Byzantine, and each trains the broadcast model, computes its
fingerprint and returns it with its parameter update \(\mathcal{V}_i \in
\mathbb{R}^{d}\). The server selects on the fingerprints, among the clients of
that round only, and aggregates the selected clients' parameters as the parent
rule does. Symbols are collected in \Cref{tab:symbol-defs}.

\begin{table}[ht]
\centering
\small
\caption{Notation.}
\label{tab:symbol-defs}
\begin{tabular}{l|p{0.62\linewidth}}
\hline
\textbf{Symbol} & \textbf{Definition} \\
\hline
\(n, f\)                 & Clients drawn per round, at most \(f\) of them Byzantine \\
\(d, \mathcal{V}_i\)     & Parameter dimension, client \(i\)'s parameter vector in \(\mathbb{R}^{d}\) \\
\(C\)                    & Number of classes \\
\(\mathcal{M}\)          & A client's trained local model \\
\(N, \mathbf{P}_{j,k}\)  & Patterns per class, and pattern \(j\) of class \(k\) \\
\(z_{j,k}, \mathbf{z}_{j,k}, \bar{\mathbf{z}}_{k}\) & Class-\(k\) logit, full response in \(\mathbb{R}^{C}\), and class mean \\
\(\bar{Z}_i\)            & Client \(i\)'s fingerprint, in \(\mathbb{R}^{C \times C}\) \\
\(S(i), m, n'\)          & Client \(i\)'s selection score, clients admitted, and clients scored at the last pass \\
\(\theta, \beta\)        & Number of first-stage survivors and per-coordinate average size, both Bulyan arms \\
\(\phi, \Pi, p, \epsilon\) & A representation, a distance-preserving map into \(p\) dimensions, and its distortion \\
\(\gamma, \Delta\)       & Benign to Byzantine separation, and largest benign distance \\
\hline
\end{tabular}
\end{table}

\subsection{Building the Fingerprint}
\label{sec:Pseudo Pattern Generation}
\label{sec: Pseudo Pattern Optimization}
\label{sec: Logits Generation}

Each client probes its trained local model \(\mathcal{M}\) with
pseudo-patterns, inputs drawn from an ordinary random number generator with no
reference to any data. For a task over \(C\) classes, the client draws \(N\)
patterns per class, \(\mathbf{P}_{j,k} \sim \mathcal{N}(0, 1)\) in the input
shape of \(\mathcal{M}\), and assigns pattern \(\mathbf{P}_{j,k}\) to class
\(k\). In our experiments the generator is seeded alike for every client and
every round, so all clients start from the same patterns.

The client then refines the patterns into proxies for their classes, adjusting
each \(\mathbf{P}_{j,k}\) so that the trained model's logit for class \(k\) is
as large as possible. It minimises, by gradient descent with momentum,
\[
\mathcal{L} = -\frac{1}{CN} \sum_{k=1}^{C} \sum_{j=1}^{N} z_{j,k},
\qquad z_{j,k} = \big(\mathcal{M}(\mathbf{P}_{j,k})\big)_{k}.
\]
The optimisation drives each pattern toward the behaviour its class elicits,
so that the responses read next can separate clients whose models have learned
different behaviour. Its hyperparameters are given in \Cref{sec:app-setup}.

The refined patterns are then read through the model. For each pattern the
client keeps the complete logit response \(\mathbf{z}_{j,k} =
\mathcal{M}(\mathbf{P}_{j,k}) \in \mathbb{R}^{C}\), rather than the single
component \(z_{j,k}\) that the objective maximises, and averages the \(N\)
responses of each class into a class mean. The \(C\) class means, stacked,
form the client's \emph{fingerprint},
\[
\bar{\mathbf{z}}_{k} = \frac{1}{N} \sum_{j=1}^{N} \mathbf{z}_{j,k},
\qquad
\bar{Z} = \big( \bar{\mathbf{z}}_{1}, \dots, \bar{\mathbf{z}}_{C} \big)
\in \mathbb{R}^{C \times C}.
\]
Keeping the whole response is why the fingerprint holds \(C^{2}\) values
rather than \(C\), and averaging within each class is why \(N\) never reaches
its size. We use \(N = 10\), a choice \Cref{sec:app-setup} explains.

Since neither dimension of \(\bar{Z}\) depends on \(N\) or on the network,
a client is described by \(C^{2}\) values however deep or wide its model,
against \(d\) values for its parameter vector. Two clients therefore remain
comparable even when their parameter vectors differ in length, provided they
share a label space and an input shape, although this work evaluates
architecturally homogeneous cohorts only.

\subsection{Selection on Fingerprints}
\label{sec:Applications}
\label{subsec: Aggregation Function}

Selection scores each client by its distances to the others. With
fingerprints \(\bar{Z}_1, \dots, \bar{Z}_n\) for the \(n\) clients of a round,
client \(i\) scores
\[
S(i) = \sum_{i \to j} \big\| \bar{Z}_i - \bar{Z}_j \big\|_F^{2},
\]
where \(i \to j\) means that \(\bar{Z}_j\) is among the \(n - f - 2\)
fingerprints nearest to \(\bar{Z}_i\), so that of its \(n - 1\) distances a
client discards the largest \(f + 1\). Summing over every other client instead
fails once \(f \geq 2\), since colluders placed far away drag the minimiser
toward another colluder \citep{blanchard2017machine}, whatever vector the
distance is computed on. The neighbourhood is Krum's own, and it needs only a
known bound \(f\) with \(n > 2f + 2\).

Selection then admits the client with the lowest score. To admit \(m\)
clients, the published Multi-Krum takes the \(m\) lowest scores from a single
sort \citep{blanchard2017machine}, while the extended version of that paper
\citep{blanchard2017byzantine} selects one client, removes it and re-scores
the rest, provided \(n - m > 2f + 2\). Blanchard et al. prove resilience for
that iterative form and leave the single sort open. Both arms of our experiments, parameter
distance and fingerprint distance alike, use the iterative form.
\emph{Krum-Lite} admits \(m = 1\) client, and \emph{Multi-Krum-Lite} admits
\(m = n - 2f - 3\), the largest \(m\) that condition allows, which is 3 at
\(n = 10\) and \(f = 2\). Algorithm~\ref{algorithm1} gives the procedure.

Bulyan selects survivors by iterated Krum and then filters them coordinate by
coordinate \citep{guerraoui2018hidden}. \emph{Bulyan-Lite} uses the same
selection as its first stage, run for \(\theta = n - 2f - 2\) clients, two
fewer than the published \(\theta = n - 2f\), in both Bulyan arms, so that
every pass of the iterative selection still scores more than \(2f + 2\)
clients, as Krum's score needs. This runs one pass beyond Multi-Krum's
condition, since \(n - \theta = 2f + 2\).

Their proof for Krum and the iterative Multi-Krum holds under the conditions
they state, which include identically distributed honest updates
\citep{blanchard2017byzantine, blanchard2017machine}, but it rests on
parameter distances, so it transfers to a variant directly when the
two distances select the same clients in every round, since the output is then
identical. What this work establishes is \Cref{lem:order}, which gives the
separation under which selection admits no Byzantine client, on parameters and
on fingerprints alike.

\begin{algorithm}[t]
    \caption{{\algoname} selection for Krum-Lite ($m=1$) and Multi-Krum-Lite ($m>1$), then the parent's aggregation. Bulyan-Lite runs the same selection with $m=\theta$ and replaces the average with Bulyan's coordinate-wise filter. Algorithm~\ref{algorithm-full} in \Cref{sec:app-algorithm} adds the client side.}
    \label{algorithm1}
    \begin{algorithmic}[1]
        \item[] \emph{// server side, selection}
        \STATE \textbf{Input:} fingerprints $\bar{Z}_1, \dots, \bar{Z}_n$, bound $f$, clients to admit $m \leq n - 2f - 2$ \COMMENT{every pass scores $|R| > 2f + 2$}
        \STATE $\delta_{ij} \gets \| \bar{Z}_i - \bar{Z}_j \|_F^2$ for all $i \neq j$ \COMMENT{computed once}
        \STATE $R \gets \{1, \dots, n\}$, \ $I \gets \emptyset$ \COMMENT{remaining, admitted}
        \FOR{$s = 1, \dots, m$}
            \FOR{each $i \in R$}
                \STATE $S(i) \gets$ sum of the $|R| - f - 2$ smallest $\delta_{ij}$, $j \in R \setminus \{i\}$
            \ENDFOR
            \STATE $i_* \gets \arg\min_{i \in R} S(i)$
            \STATE $R \gets R \setminus \{i_*\}$, \ $I \gets I \cup \{i_*\}$
        \ENDFOR
        \item[] \emph{// server side, the parent's aggregation, unchanged}
        \STATE \textbf{Return:} $\frac{1}{m} \sum_{i \in I} \mathcal{V}_i$ \COMMENT{parameters read here only}
    \end{algorithmic}
\end{algorithm}

\subsection{Aggregation, Unchanged from the Parents}
\label{subsec: Bulyan-Lite}

The selected clients are then aggregated exactly as the parent rule aggregates
them. \emph{Krum-Lite} returns the one admitted parameter vector, and
\emph{Multi-Krum-Lite} returns the average of the \(m\) admitted vectors,
which is Multi-Krum's own averaging step \citep{blanchard2017machine}.
\emph{Bulyan-Lite} applies Bulyan's coordinate-wise filter to the \(\theta\)
survivors and to them only, so the filter contributes to robustness exactly as
in Bulyan. For each coordinate it averages the \(\beta\) values nearest the
survivors' coordinate-wise median \citep{guerraoui2018hidden}, with \(\beta =
\theta - 2f = n - 4f - 2\) following from our \(\theta\). A \(\beta\) of at
least two requires \(n \geq 4f + 4\), so in the experiments Bulyan and
\emph{Bulyan-Lite} draw 12 clients per round where every other rule draws 10.

Aggregation is the step that still reads client parameters, so {\algoname},
which changes selection only, is not a standalone end-to-end protocol.
Selection reads no client parameter, so gradient inversion, which needs parameters or
gradients, has nothing to work on at that step, but every client's parameters
are still uploaded for aggregation. Secure aggregation
\citep{bonawitz2017practical} and differential privacy \citep{wei2020federated,
hu2020personalized} protect them against different threats and are often
applied together. Secure aggregation stays available to any aggregation rule
that reads only sums, such as averaging or the geometric median computed as a
sequence of secure averages \citep{pillutla2022robust}.

\subsection{Selection Cost}

Since aggregation is unchanged, any computational saving lies in selection,
where the following bound applies.

\begin{lemma}[Selection cost]
\label{lem:complexity}
Selecting \(m\) clients on fingerprints runs in \(O(n^{2} C^{2} + m\, n^{2}
\log n)\) time, against \(O(n^{2} d + m\, n^{2} \log n)\) for the
parameter-distance rule it replaces.
\end{lemma}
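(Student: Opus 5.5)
The plan is to count the operations of Algorithm~\ref{algorithm1} in two phases, the one-off distance table and the $m$ scoring passes, and to observe that the two rules differ only in the first. The representation enters only through line 2. Once the matrix $(\delta_{ij})$ is in hand, the remaining steps read nothing but scalars. They are therefore identical whether $\delta_{ij}$ was computed on $\bar{Z}_i \in \mathbb{R}^{C\times C}$ or on $\mathcal{V}_i \in \mathbb{R}^{d}$.

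\emph{Distance table.} For each of the $n(n-1)/2$ unordered pairs, $\|\bar{Z}_i-\bar{Z}_j\|_F^2$ is a sum of $C^2$ squared differences. It therefore costs $O(C^2)$ arithmetic operations, and filling the symmetric table costs $O(n^2C^2)$. The same count with $\|\mathcal{V}_i-\mathcal{V}_j\|_2^2$ gives $O(n^2 d)$. Computing the fingerprints themselves is not charged here, since clients produce them locally before selection. The statement concerns the server's selection step only, as in \Cref{tab:overhead}.

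\emph{Scoring passes.} In pass $s$ the remaining set has $|R| = n-s+1 \le n$ elements. For each $i\in R$, the score $S(i)$ needs the $|R|-f-2$ smallest entries among the $|R|-1$ values $\delta_{ij}$, $j\in R\setminus\{i\}$. I would charge this a sort, $O(n\log n)$, then a sum of $O(n)$ terms. Over all $i\in R$ this is $O(n^2\log n)$, and taking the argmin and updating $R$ and $I$ add $O(n)$. Summing over $s=1,\dots,m$ gives $O(m\,n^2\log n)$. A linear-time selection algorithm would improve this to $O(m\,n^2)$, but the bound stated in \Cref{lem:complexity} only needs the sort-based upper bound. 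I would add that the tables are not recomputed between passes: removing $i_*$ only deletes a row and column of $(\delta_{ij})$.

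Adding the two phases gives $O(n^2C^2 + m\,n^2\log n)$ for fingerprints and $O(n^2d+m\,n^2\log n)$ for parameters, which is the claim. For Bulyan-Lite the same count holds with $m=\theta$, consistent with \Cref{tab:krum-vs-krum-lite}. There is no real obstacle. The only point needing care is to be explicit about what is counted: arithmetic on reals in the RAM model, excluding the clients' pattern optimisation and the parent's aggregation, which reads $\mathcal{V}_i$ and is unchanged. Without that, the comparison is not like-for-like.
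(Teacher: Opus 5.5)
Your proposal is correct and follows the paper's own proof: the $O(n^{2})$ pairwise distances are computed once at $O(C^{2})$ each and reused in every pass, each of the $m$ passes sorts every remaining client's distances at $O(n\log n)$ per client, and substituting $d$ for $C^{2}$ gives the parameter-distance bound. Your explicit statement that client-side fingerprint computation and the parent's aggregation are excluded, and your note that linear-time selection would tighten the second term, are accurate additions rather than a different route.
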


The count behind the bound is given in \Cref{sec:app-wallclock}. The bound is
an improvement whenever \(C^{2} \ll d\). For the 10-class tasks of this work
\(C^{2} = 100\), against \(d\) of 29{,}130, 552{,}362 and 4{,}909{,}002 for the
three architectures, so the compared quantity shrinks by factors of 291,
5{,}524 and 49{,}090 respectively. The saving is in elements compared, for the
selection step alone. The client pays instead, with one pattern optimisation
per round, whose cost \Cref{subsec:cost} reports.
\Cref{tab:krum-vs-krum-lite} sets the selection cost beside the other
properties that separate the variants from their parents.

\krumVsKrumLite

\section{Theoretical Analysis}
\label{sec:theory}
Selecting on $C^{2}$ values in place of $d$, with $C^{2} \ll d$ for every model
considered here, raises two questions. The first is
whether a distance computed in so small a space can order clients at all. The
second is what the logit map provides that any compression of the same size
would not.

The first question is answered in principle by a classical result of Johnson
and Lindenstrauss \citep{johnson1984extensions, dasgupta2003elementary}. For
any \(n\) points there is a map into \(p\) dimensions that distorts every
pairwise squared distance by a factor of at most \(1 \pm \epsilon\), with \(p\)
of order \(\epsilon^{-2} \log n\) (\Cref{lem:jl} in \Cref{sec:app-theory}).
The bound involves $n$, the number of clients compared in a round, and not
$d$, so the size of the model alone does not rule out a representation of
fixed width. The lemma is an existence result, proved with a random
linear projection. It bears on whether $C^{2}$ values could ever suffice, not
on whether the fingerprint, which is neither linear nor a random projection,
is a map that achieves it. At our cohort sizes its guarantee at
\(p = C^{2} = 100\) is weak (\Cref{sec:app-theory}), but a weak distortion
need not be fatal, because Krum and its variants admit clients by the order of
their scores rather than by their values, so what matters is whether a margin
between benign and Byzantine clients survives the distortion. We first make
the relevant notion of separation precise.

\begin{definition}[$\gamma$-separability]
\label{def:sep}
Let $\phi$ be the representation on which clients are compared. A round is
$\gamma$-separable under $\phi$ if there exists $\Delta > 0$ such that
$\|\phi(\mathcal{V}_i) - \phi(\mathcal{V}_j)\| \leq \Delta$ for every pair of
benign clients $i$ and $j$, and $\|\phi(\mathcal{V}_i) - \phi(\mathcal{V}_a)\|
\geq (1+\gamma)\Delta$ for every benign client $i$ and every Byzantine client
$a$, with the Frobenius norm for fingerprints. Taking $\phi$ to be the identity gives separability in parameter space,
and taking $\phi$ to map each client's parameters to its fingerprint gives it in
fingerprint space.
\end{definition}

Every selection rule here scores clients with Krum's neighbourhood score, run
iteratively as in \Cref{subsec: Aggregation Function}. The result below
concerns that selection alone, so Bulyan's coordinate-wise second stage lies
outside it.

\begin{lemma}[Separation keeps attackers out]
\label{lem:order}
Let a round with at most $f$ Byzantine clients be $\gamma$-separable under
$\phi$, and let selection on $\phi$ admit $m$ clients iteratively, so that its last pass scores $n' = n - m + 1$
clients, with $n' > 2f + 1$ so that no Byzantine client can fill its
neighbourhood with its colluders. If
\[
(1+\gamma)^{2} > \frac{n'-f-2}{n'-2f-1},
\]
then every client admitted is benign.
\end{lemma}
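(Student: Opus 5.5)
The plan is an induction over the passes of the iterative selection in \Cref{subsec: Aggregation Function}. At each pass I compare the largest score a benign client can have with the smallest score a Byzantine client can have, and show the Byzantine one is strictly larger. The argmin is then benign. Fix a pass that scores a remaining set \(R\) with \(|R| = r\). Since passes remove one client each, \(r \geq n' = n - m + 1\). Write \(\delta_{ij} = \|\phi(\mathcal{V}_i) - \phi(\mathcal{V}_j)\|^{2}\), which is the quantity \(S\) sums (the squared Frobenius norm for fingerprints). Whatever earlier passes removed, \(R\) contains at most \(f\) Byzantine clients and hence at least \(r - f\) benign ones. I do not need the inductive hypothesis that earlier admissions were benign, only this count, so every pass can be treated the same way. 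If \(R\) holds no Byzantine client the claim is trivial, so assume at least one.

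\emph{Upper bound for a benign client.} Take \(i \in R\) benign. It has at least \(r - f - 1 \geq r - f - 2\) benign peers in \(R \setminus \{i\}\), each with \(\delta_{ij} \leq \Delta^{2}\) by \Cref{def:sep}. The sum of the \(r - f - 2\) smallest distances is at most the sum over any \(r-f-2\) of these peers. Hence \(S(i) \leq (r - f - 2)\Delta^{2}\).

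\emph{Lower bound for a Byzantine client.} Take \(a \in R\) Byzantine. Its neighbourhood holds \(r - f - 2\) clients, of which at most \(f - 1\) are fellow Byzantine clients. So at least \(r - 2f - 1\) of them are benign, and this count is positive because \(r \geq n' > 2f + 1\); this is exactly where the hypothesis \(n' > 2f+1\) is used. Each benign neighbour contributes at least \((1+\gamma)^{2}\Delta^{2}\). Colluders contribute at least \(0\). Hence \(S(a) \geq (r - 2f - 1)(1+\gamma)^{2}\Delta^{2}\).

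\emph{Comparing the bounds.} We need \((r-2f-1)(1+\gamma)^{2} > r-f-2\) at every pass, that is \((1+\gamma)^{2} > (r-f-2)/(r-2f-1) = 1 + (f-1)/(r-2f-1)\). For \(f \geq 1\) the right-hand side is non-increasing in \(r\) on \(r > 2f+1\), so its largest value occurs at the smallest scored set, \(r = n'\). There the inequality is the hypothesis of the lemma. For \(f = 0\) there are no Byzantine clients and nothing to prove. Since \(\Delta > 0\), the inequality is strict, so \(S(a) > S(i)\) for every Byzantine \(a\) and benign \(i\) in \(R\), even under ties in the argmin. Each pass therefore admits a benign client, and so all \(m\) admitted clients are benign. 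The argument uses only \Cref{def:sep} and never the form of \(\phi\), so it applies to parameters and to fingerprints alike.

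The main obstacle is the neighbourhood count for a Byzantine client. One must check that colluders can occupy at most \(f-1\) of its \(r-f-2\) slots, and that the remaining benign slots are counted at the last pass, where the set is smallest. This is also why the bound is stated with \(n'\) rather than \(n\), with the monotonicity in \(r\) justifying that the last pass is the worst case.
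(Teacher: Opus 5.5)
Your proof is correct and follows the paper's argument essentially step for step. It uses the benign upper bound $(r-f-2)\Delta^{2}$, the Byzantine lower bound $(r-2f-1)(1+\gamma)^{2}\Delta^{2}$ (from at most $f-1$ colluders in the neighbourhood), and the monotonicity of $(r-f-2)/(r-2f-1)$ in $r$ that makes the last pass the binding one. Your separate treatment of $f=0$, where that ratio in fact increases with $r$ (a case the paper's monotonicity claim passes over but which is vacuous), and your observation that each pass needs only the count of Byzantine clients rather than an inductive hypothesis, are small tidy-ups, not a different route.
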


At a pass that scores \(r\) clients, a benign client can fill its
neighbourhood with benign peers and so scores at most \((r-f-2)\Delta^{2}\),
while a Byzantine client must count at least \(r-2f-1\) benign clients among
its nearest and scores at least \((r-2f-1)(1+\gamma)^{2}\Delta^{2}\). The
condition puts every Byzantine score above every benign one at the last pass,
where the ratio of the two counts is largest, and so at every pass. The full
proof is in \Cref{sec:app-theory}. Unlike the parents' resilience proofs,
\Cref{lem:order} assumes no distribution of honest updates, only their
separation in the round.

The lemma also covers a compression of the parameters. A map that distorts
squared distances by at most \(1 \pm \epsilon\) keeps the round separable, with
\((1+\gamma)^{2}\) reduced by the factor \((1-\epsilon)/(1+\epsilon)\)
(\Cref{cor:compression} in \Cref{sec:app-theory}). Fingerprints are compared
directly, with no distortion term, and with $f = 2$ \Cref{lem:order} asks for
\(\gamma > 0.10\) at the single pass of \emph{Krum-Lite} (\(n = 10\)), \(0.16\)
at the last pass of \emph{Multi-Krum-Lite} (\(n' = 8\)) and \(0.23\) at the
last pass of \emph{Bulyan-Lite} (\(n = 12\), \(n' = 7\)), rounded up. Below
these thresholds the lemma no longer gives the guarantee, and the margin
vanishes altogether as benign clients drift as far from one another as they
sit from the attackers.

The argument so far would license any map that preserves separation, including
a random projection of the parameters, which \Cref{subsec:ablation} scores as a
control at $C$ dimensions rather than $C^{2}$. What distinguishes the
fingerprint is that it evaluates the function the model computes rather than
compressing the parameters that compute it, so an update that changes the
model's class-averaged responses to the pseudo-patterns changes the
fingerprint, whereas a projection preserves only whatever separation already
exists. How well the pseudo-patterns cover the behaviour an attacker actually
changes is not established here.

\begin{remark}
\label{rem:nullspace}
The map from parameters to fingerprints sends $\mathbb{R}^{d}$ to
$\mathbb{R}^{C^{2}}$, so near any model where the map is differentiable,
parameter changes in a subspace of dimension at least $d - C^{2}$, more than
four million for ResNet-9, leave the fingerprint unchanged to first order. An
adversary who maximises attack success within that subspace, so that its
fingerprint still looks benign, is the natural adaptive attack against this
method. We do not evaluate such an adversary in this work, and we return to it
in \Cref{sec:conclusion}.
\end{remark}

\section{Experiments}
\label{sec:experiments}
The benchmark asks whether the substitution costs robustness in end-to-end
training, each variant against its own parent. The ablation asks whether the
fingerprint ranks clients as the parameter vector does on identical updates,
and how it compares with a smaller random projection, which bears on, but does
not settle, the second question of \Cref{sec:theory}. The cost analysis
measures what the substitution trades.

We train on Fashion-MNIST \citep{xiao2017fashion} with a CNN and on CIFAR-10
\citep{krizhevsky2009learning} with a CNN and with ResNet-9
\citep{he2016deep}, under i.i.d.\ partitions, the protocol of the foundational
works cited in \Cref{sec:intro}. The population is 100 clients, 25 of them
adversarial, and each round a seeded rule draws \(n\) clients with exactly
\(f = 2\) adversaries among them. Every rule draws \(n = 10\), a 20\% share,
except the Bulyan pair, which draws 12 and faces 16.7\%
(\Cref{subsec: Bulyan-Lite}), and rules drawing the same \(n\) see the same
clients in the same round. Four attacks are run. The 1-pixel and
9-pixel backdoors plant a trigger of one pixel \citep{gu2017badnets} or nine
sparse pixels and relabel the triggered images to a target class, label
flipping relabels four source classes to four targets, and the targeted attack
is the single-pair version \citep{tolpegin2020data}. Federated
averaging, which admits every client, is the undefended reference, and the
coordinate-wise median and the trimmed mean \citep{yin2018byzantine} are
references that combine every update without selecting. Full details are in
\Cref{sec:app-setup}.

Each round the global model is evaluated on the test set for clean accuracy
and on a held-out set of attacked samples for attack success, which is the
backdoor accuracy under a pixel trigger and the malicious-task accuracy under
a label attack, and we count the rounds in which an adversary is admitted.
Every benchmark accuracy is the mean over the final 10 of 100 rounds, and each
configuration is summarised by the median of three repeats of one fixed seed,
which differ only by run-to-run non-determinism (\Cref{sec:app-setup}),
\(\pm\) the median of their within-run standard deviations. One rule in one
configuration is a \emph{cell}. A
configuration is \emph{discriminative} when the attack exceeds 40\% success
under federated averaging and that run reaches at least 40\% clean
accuracy, since where an attack fails with no defence in place there is
nothing to resist. Eleven of the twelve configurations, three dataset and
model pairs under four attacks, are discriminative, and every comparison below
is computed on those eleven, except the admission counts, which cover all
twelve. The twelfth, the targeted attack on CIFAR-10 with ResNet-9, where
undefended averaging reaches 29.1\%, stays in the tables, marked.

\subsection{Results}
\label{subsec:results}

\Cref{tab:ExperimentalResults} reports every cell and \Cref{tab:ResultsDelta}
the paired differences in both metrics. The substitution does not cost
robustness. All three variants hold the attack below the 40\% success
threshold in all eleven configurations, and so do Krum and Multi-Krum, while
only Bulyan falls short, holding in 10 of 11. The paired median differences in
attack success are small in all three pairs (\Cref{tab:ResultsDelta}, and per
configuration \Cref{fig:gap-parent} in \Cref{sec:app-results}), and no variant in any discriminative configuration lets attack
success past 20\%, the highest being \emph{Bulyan-Lite} at 17.41\% under the
targeted attack on CIFAR-10 with the CNN. Under the targeted attack
\emph{Bulyan-Lite} ends above its parent (\Cref{fig:rounds-main}c in
\Cref{sec:app-rounds}).

\ExperimentalResults
\ResultsDelta

One configuration separates a pair outright. Under the 1-pixel trigger on
CIFAR-10 with the CNN, Bulyan reaches 52.79\% attack success while
\emph{Bulyan-Lite} holds it at 10.18\%, a difference of 42.6 points and the
widest in the study (\Cref{fig:rounds-main}a). All three of Bulyan's repeats,
which share one seed, lie above the threshold, spanning 8.1 points, and the gap
comes on the study's smallest adversarial share, since the Bulyan pair draws
twelve clients. Nowhere else does one rule of a pair hold the attack while the
other fails. This is one rule on one configuration, not a claim that the
fingerprint detects better than the parameter distance, and the ablation below
finds the two at par.

The admission counts agree (\Cref{tab:Admission} in \Cref{sec:app-results}).
Pooling the three architectures and the three repeats of each attack over the
final 10 rounds, 90 rounds in all, neither Krum pair admits an adversary under
any attack. Bulyan admits an adversary in 24 of 90 rounds under the 1-pixel trigger and
5 of 90 under the 9-pixel, and \emph{Bulyan-Lite} in none, 4 and 1 of 90 under
the 1-pixel, 9-pixel and targeted attacks, the last where Bulyan admits none.

Clean accuracy does not suffer. For the Krum variants, replacing the distance
raises it in every one of the eleven configurations, at median differences of
$+4.22$ and $+3.55$ points, while \emph{Bulyan-Lite} leads its parent in only
5 of 11 and trails at a median of $-0.19$ points, so the substitution is
accuracy-neutral to accuracy-positive rather than uniformly positive. At the
median across configurations every selection rule sits below undefended
averaging, 71.41\% to 81.58\% against 82.16\%, which is the price of
discarding clients and is not specific to this work.

Robustness here comes from selecting clients. The coordinate-wise median and the trimmed mean, which
combine every update, hold only 5 of the 11 configurations and resist neither
pixel trigger, at 85.79\% to 99.97\% attack success
(\Cref{sec:app-nofilter}). Three cells have repeats spanning more than 5
points, and \Cref{sec:app-repeats} draws each one repeat per row, while
\Cref{sec:app-rounds} traces every rule through all 100 rounds.

\subsection{Ablation Study}
\label{subsec:ablation}

Separate training runs answer what a deployment would experience, but not
whether the fingerprint can substitute for the parameter vector in ranking
clients. Two runs diverge after the first round, so a later gap reflects two
distance functions and two training histories at once, and where neither run
admits an attacker a match says nothing about the distance. We therefore score
five candidate measures as detectors on the updates of one run, each with the
score of \Cref{subsec: Aggregation Function}. Only the fingerprint drives
selection and the rest are recorded alongside, so every measure faces an
identical ranking problem. The five are the flattened client update (\(d\)),
which Krum ranks on, a fixed Gaussian projection of it (\(C\)), the
penultimate features (\(h\), their width), the logit fingerprint (\(C^{2}\)),
and that fingerprint after a softmax, with dimensions from \(C = 10\) up to
\(d\) of about 4.9 million. \emph{AUC} is the probability that a randomly
chosen malicious client is ranked more distant than a randomly chosen benign
one, so \(0.5\) is chance, and \emph{precision@\(m\)} is the malicious fraction
of the \(m\) clients selection would admit, so lower is better and chance is
\(f/n = 0.20\). The grid is three architectures by the two Krum variants,
admitting one and three clients of ten, over 100 rounds and three repeats
under the 9-pixel backdoor alone, each run's AUC being the mean of its
per-round values. \Cref{tab:ablation-delta-p3} gives the change in AUC from
the parameter vector to the fingerprint, and \Cref{sec:app-ablation} gives
every measure in full (\Cref{tab:ablation-auc-p3}), the precision table and
the spread across repeats.

\AblationDelta

The fingerprint matches the full parameter vector. Their per-cell medians
differ by at most $0.006$ AUC in either direction and by $0.000$ at the median,
with the fingerprint between $0.988$ and $1.000$ against $0.993$ to $1.000$ for
the parameters, and precision@\(m\) is $0.000$ for both in every cell.
Replacing \(d\) with \(C^{2}\), between two and five orders of magnitude fewer
values, costs nothing on this ranking problem.

The fingerprint is also ahead of a random projection of the parameter vector,
though not a size-matched one. The random projection, ten times smaller than
the fingerprint, trails it in all six cells, by $0.017$ to $0.285$ AUC and by
$0.042$ at the median, and it places malicious clients among those selection
would admit, its precision@\(m\) reaching $0.090$ on CIFAR-10 against $0.000$
throughout for the fingerprint. The softmax trails the fingerprint in every
cell and the penultimate representation in five of six, and only the
fingerprint and the parameter distance stay within $0.008$ AUC across their
three repeats (\Cref{sec:app-ablation}).

\subsection{Cost}
\label{subsec:cost}

The robustness results above establish that the substitution is safe to make
rather than that it is worth making. What makes it worth making is that the
trust decision compares \(C^{2}\) values rather than \(d\), by the factors
given after \Cref{lem:complexity}, without reading a client parameter.
Measured on the clock at \(n = 10\), the selection speedup is far smaller than
the element ratio, and \Cref{sec:app-wallclock} reports both, with
\Cref{tab:overhead} measuring the selection, communication and client cost.

The fingerprint adds 400 bytes per client per round, a share largest on the
smallest model and even there a third of one percent. Each client runs one
pseudo-pattern optimisation per round, which at 1000 iterations costs between
78\% and 217\% of a round of local training. The burden falls as the model
grows. On ResNet-9 the optimisation is already cheaper than the training it
accompanies, and it weighs heaviest on the Fashion-MNIST CNN, where there is
least training to amortise it against. The cost falls on clients while the
saving is at the server and in what its trust decision reads, so this is a
shift in where the work is done and in what selection discloses rather than a
reduction in total work. The shift also leaves the method indifferent to the
size of the federation. The client's optimisation is local and runs in
parallel, so it grows with neither the cohort nor the population, and the
server compares the cohort it draws each round rather than the enrolled
population, so at a fixed cohort size a deployment of millions costs the same
per round as one of hundreds.

The iteration count, held uniform here rather than tuned per task, is the
largest lever on this cost, since cost is linear in it, and in earlier
experiments 100 iterations performed comparably on the two CNN architectures.
The patterns per class are a second, since the fingerprint's width and payload
do not change with $N$ while the optimisation scales with it.

\section{Conclusion}
\label{sec:conclusion}
We introduced {\algoname}, which substitutes a \emph{fingerprint}, a client's
class-conditional logit response to pseudo-patterns optimised against its
model, for the client's parameters in Byzantine-robust client selection, and
instantiated it as \emph{Krum-Lite}, \emph{Multi-Krum-Lite} and
\emph{Bulyan-Lite}, which differ from Krum, Multi-Krum and Bulyan in the
distance alone. Each variant matched its parent, holding every attack the
parent held, and on identical updates the fingerprint ranked malicious clients
as well as the parameter vector. By construction, the trust decision reads no
client parameter, compares \(C^{2}\) values instead of \(d\), and remains defined
when clients' architectures differ.

\textbf{Limitations.} All results are i.i.d., the setting in which Krum and
Bulyan are proved resilient \citep{blanchard2017machine, guerraoui2018hidden}.
Attacks that exploit heterogeneous data are known to defeat Krum
\citep{karimireddy2022bucketing}, and whether fingerprints keep malicious
clients apart under such data is untested. That question, adaptive adversaries
such as that of \Cref{rem:nullspace}, and cohorts that mix architectures are
the work we plan to pursue next.

\clearpage
\bibliographystyle{icml2026}
\bibliography{6-references}

\clearpage
\appendix
\onecolumn
\section*{Appendix}
\section{Full Selection Procedure}
\label{sec:app-algorithm}

Algorithm~\ref{algorithm-full} extends Algorithm~\ref{algorithm1} with the
client side, the computation by which each client turns its trained model into
a fingerprint (\Cref{sec:Pseudo Pattern Generation}).

\begin{algorithm}[h]
    \caption{The full {\algoname} procedure, server and client side, for Krum-Lite ($m=1$) and Multi-Krum-Lite ($m>1$), followed by the parent's aggregation. Bulyan-Lite's first stage is the same selection with $m=\theta$, after which Bulyan's coordinate-wise filter replaces the average. Sorting the scores once and taking the $m$ lowest, the published Multi-Krum, is not used here.}
    \label{algorithm-full}
    \begin{algorithmic}[1]
        \item[] \emph{// server side, selection}
        \STATE \textbf{Input:} fingerprints $\bar{Z}_1, \dots, \bar{Z}_n$, bound $f$, clients to admit $m \leq n - 2f - 2$ \COMMENT{every pass scores $|R| > 2f + 2$}
        \STATE $\delta_{ij} \gets \| \bar{Z}_i - \bar{Z}_j \|_F^2$ for all $i \neq j$ \COMMENT{computed once}
        \STATE $R \gets \{1, \dots, n\}$, \ $I \gets \emptyset$ \COMMENT{remaining, admitted}
        \FOR{$s = 1, \dots, m$}
            \FOR{each $i \in R$}
                \STATE $S(i) \gets$ sum of the $|R| - f - 2$ smallest $\delta_{ij}$, $j \in R \setminus \{i\}$
            \ENDFOR
            \STATE $i_* \gets \arg\min_{i \in R} S(i)$
            \STATE $R \gets R \setminus \{i_*\}$, \ $I \gets I \cup \{i_*\}$
        \ENDFOR
        \item[] \emph{// server side, the parent's aggregation, unchanged}
        \STATE \textbf{Return:} $\frac{1}{m} \sum_{i \in I} \mathcal{V}_i$ \COMMENT{parameters read here only}
    \end{algorithmic}

    \hfill

    \begin{algorithmic}[1]
        \item[] \emph{// client side}
        \item[] \textbf{ClientFingerprint($\mathcal{M}_i$):}
        \STATE \textbf{Input:} $\mathcal{M}_i$ \COMMENT{trained local model}
        \STATE $\mathbf{P} \gets$ $N$ seeded patterns per class \COMMENT{\Cref{sec:Pseudo Pattern Generation}}
        \STATE $\mathbf{P} \gets$ minimise $\mathcal{L}$ over $\mathbf{P}$ under $\mathcal{M}_i$
        \STATE $\bar{Z}_i \gets$ class-wise mean of $\mathcal{M}_i(\mathbf{P})$ \COMMENT{$C$ means of $C$ values}
        \STATE \textbf{Return:} $\bar{Z}_i$, $\mathcal{V}_i$ \COMMENT{$\mathcal{V}_i$ from $\mathcal{M}_i$}
    \end{algorithmic}
\end{algorithm}

\section{Theoretical Details}
\label{sec:app-theory}

For a distortion $\epsilon$, Johnson and Lindenstrauss~\citeyearpar{johnson1984extensions} bound the
target dimension by an order of $\epsilon^{-2} \log n$ with an unspecified constant, and the explicit
form below is that of Dasgupta and Gupta~\citeyearpar{dasgupta2003elementary}.

\begin{lemma}[Johnson--Lindenstrauss, restated]
\label{lem:jl}
For any $0 < \epsilon < 1$, any integer $n$, and any set $V$ of $n$ points in
$\mathbb{R}^{d}$, let $p$ be a positive integer with
\[
p \geq 4\left(\frac{\epsilon^{2}}{2} - \frac{\epsilon^{3}}{3}\right)^{-1} \ln n .
\]
Then there exists a map $\Pi : \mathbb{R}^{d} \rightarrow \mathbb{R}^{p}$ such
that, for all $u, v \in V$,
\[
(1-\epsilon)\|u-v\|^{2} \leq \|\Pi(u)-\Pi(v)\|^{2} \leq (1+\epsilon)\|u-v\|^{2} .
\]
\end{lemma}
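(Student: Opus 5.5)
The plan is the probabilistic method with a random Gaussian linear map, following Dasgupta and Gupta. Let $\Pi(x) = p^{-1/2} A x$, where $A \in \mathbb{R}^{p \times d}$ has i.i.d.\ $\mathcal{N}(0,1)$ entries. Because $\Pi$ is linear, $\Pi(u) - \Pi(v) = \Pi(u-v)$. It therefore suffices to show that, with positive probability, every one of the $\binom{n}{2}$ difference vectors $w = u - v$ satisfies $(1-\epsilon)\|w\|^{2} \leq \|\Pi(w)\|^{2} \leq (1+\epsilon)\|w\|^{2}$. A positive probability of success means that some realisation of $A$ works, which gives the map the lemma asks for. The case $n = 1$ is vacuous, so I take $n \geq 2$.

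First, fix a single $w \neq 0$ and normalise it to $\|w\| = 1$. By rotational invariance of the Gaussian, the entries of $Aw$ are i.i.d.\ $\mathcal{N}(0,1)$. Hence $Y = p\|\Pi(w)\|^{2}$ is $\chi^{2}$ with $p$ degrees of freedom, whose moment generating function is $\mathbb{E}[e^{tY}] = (1-2t)^{-p/2}$ for $t < 1/2$.

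Second, bound each tail by Markov's inequality applied to $e^{tY}$ and optimise over $t$. For the upper tail this gives
\[
\Pr[Y \geq (1+\epsilon)p] \leq \big((1+\epsilon)e^{-\epsilon}\big)^{p/2}.
\]
For the lower tail, using $e^{-tY}$, it gives
\[
\Pr[Y \leq (1-\epsilon)p] \leq \big((1-\epsilon)e^{\epsilon}\big)^{p/2}.
\]
I then use the elementary inequalities $\ln(1+\epsilon) \leq \epsilon - \epsilon^{2}/2 + \epsilon^{3}/3$ and $\ln(1-\epsilon) \leq -\epsilon - \epsilon^{2}/2$. With these, both tails are at most $\exp\!\big(-\tfrac{p}{2}(\epsilon^{2}/2 - \epsilon^{3}/3)\big)$; the lower tail in fact satisfies the stronger bound $e^{-p\epsilon^{2}/4}$. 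Under the hypothesis $p \geq 4(\epsilon^{2}/2 - \epsilon^{3}/3)^{-1}\ln n$, the exponent is at least $2\ln n$, so each tail is at most $n^{-2}$.

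Third, take a union bound over the $\binom{n}{2}$ pairs and the two tails. The probability that some pair is distorted by more than $1 \pm \epsilon$ is at most $2\binom{n}{2}n^{-2} = 1 - 1/n < 1$, so a good $A$ exists.

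The main obstacle is the second step. It requires choosing $t$ correctly in each Chernoff bound: $t = \epsilon/(2(1+\epsilon))$ for the upper tail and the analogous choice for the lower tail. It also requires checking that the logarithmic inequalities hold on all of $0 < \epsilon < 1$, so that the constant comes out exactly as $4(\epsilon^{2}/2 - \epsilon^{3}/3)^{-1}$ and not some weaker multiple. I would verify those inequalities from the Taylor series, noting that the series for $\ln(1+\epsilon)$ alternates with decreasing terms on this range.
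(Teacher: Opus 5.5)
Your proof is correct and takes essentially the paper's approach. The paper gives no proof of its own: it cites Dasgupta and Gupta and notes only that the lemma is proved with a random linear projection, and your argument matches theirs, namely Chernoff bounds on the squared norm of a projected difference vector, the same two logarithmic inequalities, and a union bound over the $\binom{n}{2}$ pairs with success probability at least $1/n$. The one cosmetic difference is that they project onto a uniformly random $p$-dimensional subspace and bound a ratio of Gaussian sums, whereas your i.i.d.\ Gaussian matrix gives a $\chi^{2}_{p}$ variable directly, with the same constant $4(\epsilon^{2}/2-\epsilon^{3}/3)^{-1}$.
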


At the cohort sizes used here and $\epsilon = 0.5$ it asks for
$p \geq 111$ at $n = 10$ and $p \geq 120$ at $n = 12$, both slightly above the
$C^{2} = 100$ available with 10 classes, so the guarantee it gives directly is
weak, at $\epsilon \approx 0.535$ and $0.565$ respectively.

\begin{proof}[Proof of \Cref{lem:order}]
Consider a pass that scores $r \geq n'$ clients with every Byzantine client
among them, as at the first pass. Separation is a statement about pairs, so it
holds among these clients too. A score sums the $r-f-2$ smallest squared
distances, so a benign client, which can fill its neighbourhood with benign
peers, scores at most $(r-f-2)\Delta^{2}$. \Cref{def:sep} leaves Byzantine to
Byzantine distances unbounded, so colluders may fill at most $f-1$ of a
Byzantine client's summed terms, leaving at least $r-2f-1 \geq 1$ against benign
clients and a score of at least $(r-2f-1)(1+\gamma)^{2}\Delta^{2}$. The ratio
$(r-f-2)/(r-2f-1)$ does not increase with $r$, so the stated condition makes the
second bound exceed the first at every pass. Each pass therefore admits a benign
client and leaves every Byzantine client for the next.
\end{proof}

\begin{corollary}[A compression keeps separation]
\label{cor:compression}
If a round is $\gamma$-separable under the identity and a map $\Pi$ satisfies
the distortion bound of \Cref{lem:jl} on the round's clients, then the
conclusion of \Cref{lem:order} holds for selection on $\Pi$ whenever
\[
(1+\gamma)^{2} > \frac{n'-f-2}{n'-2f-1} \cdot \frac{1+\epsilon}{1-\epsilon}.
\]
\end{corollary}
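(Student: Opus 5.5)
The plan is to reduce the corollary to \Cref{lem:order}. I will show that the compressed round is itself separable in the sense of \Cref{def:sep}, with an explicitly degraded margin, and then check that the hypothesis of the corollary is exactly the hypothesis of \Cref{lem:order} for that degraded margin. No new counting argument is needed. The whole proof lives in transferring the two inequalities of \Cref{def:sep} through the distortion bound of \Cref{lem:jl}.

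\textbf{Step 1: bound distances under $\Pi$.} Take $V$ to be the round's parameter vectors $\mathcal{V}_1,\dots,\mathcal{V}_n$, on which $\Pi$ satisfies the distortion bound by assumption.

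For benign $i,j$, $\gamma$-separability under the identity gives $\|\mathcal{V}_i-\mathcal{V}_j\|\le\Delta$. Hence
\[
\|\Pi(\mathcal{V}_i)-\Pi(\mathcal{V}_j)\|^{2}\le(1+\epsilon)\Delta^{2}.
\]
For benign $i$ and Byzantine $a$,
\[
\|\Pi(\mathcal{V}_i)-\Pi(\mathcal{V}_a)\|^{2}\ge(1-\epsilon)(1+\gamma)^{2}\Delta^{2}.
\]

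\textbf{Step 2: rewrite this as separability under $\phi=\Pi$.} Set $\Delta'=\sqrt{1+\epsilon}\,\Delta>0$, and define $\gamma'$ by
\[
(1+\gamma')^{2}=(1+\gamma)^{2}\,\frac{1-\epsilon}{1+\epsilon}.
\]
Then benign pairs lie within $\Delta'$ of each other, and every benign–Byzantine pair is at distance at least $(1+\gamma')\Delta'$. So the round is $\gamma'$-separable under $\Pi$ in the sense of \Cref{def:sep}.

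\textbf{Step 3: confirm $\gamma'>0$.} This is the one point that needs care, since \Cref{def:sep} implicitly wants a genuine margin. It follows from the corollary's hypothesis. The hypothesis says $(1+\gamma')^{2}>(n'-f-2)/(n'-2f-1)$. Under $n'>2f+1$ this ratio is at least $1$, because $n'-f-2\ge n'-2f-1$ whenever $f\ge1$; the case $f=0$ is trivial, as there are no Byzantine clients. Hence $1+\gamma'>1$.

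\textbf{Step 4: apply \Cref{lem:order}.} Apply the lemma with $\phi=\Pi$, margin $\gamma'$ and scale $\Delta'$. The number of admitted clients $m$, the last-pass count $n'$, the condition $n'>2f+1$ and the bound of $f$ Byzantine clients are all unchanged by the compression. The lemma's condition reads
\[
(1+\gamma')^{2}>\frac{n'-f-2}{n'-2f-1},
\]
and multiplying through by $(1+\epsilon)/(1-\epsilon)>0$ gives exactly the stated inequality. Its conclusion then holds: every client admitted by iterative selection on $\Pi$ is benign.

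\textbf{Main obstacle.} There is little genuine difficulty here. The step most easily mishandled is the change of scale, because \Cref{def:sep} is phrased in unsquared norms while the distortion is stated on squared norms. Working with squares throughout, and absorbing $\sqrt{1+\epsilon}$ into $\Delta'$, avoids any square-root inequalities.
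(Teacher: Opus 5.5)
Your proposal is correct and is the paper's own argument. Push both inequalities of \Cref{def:sep} through the distortion bound to get $\gamma'$-separability with $\Delta'=\sqrt{1+\epsilon}\,\Delta$ and $(1+\gamma')^{2}=(1+\gamma)^{2}(1-\epsilon)/(1+\epsilon)$, then substitute into \Cref{lem:order}; your Step 3 check that $\gamma'>0$ is extra care the paper omits and does not change the route.
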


\begin{proof}
After the map, benign pairs lie within $\sqrt{1+\epsilon}\,\Delta$ of each
other and benign to Byzantine pairs at least $\sqrt{1-\epsilon}\,(1+\gamma)\Delta$
apart, which is $\gamma'$-separability with $(1+\gamma')^{2} =
(1+\gamma)^{2}(1-\epsilon)/(1+\epsilon)$. Substituting into \Cref{lem:order}
gives the condition.
\end{proof}

With $f = 2$, a compression of the parameters with the distortion of
\Cref{lem:jl} at $p = 100$ would need, by \Cref{cor:compression}, $\gamma >
1.00$ at the single pass of \emph{Krum-Lite} ($n = 10$, $\epsilon \approx
0.535$), $\gamma > 1.10$ at the last pass of \emph{Multi-Krum-Lite} ($n' =
8$), and $\gamma > 1.33$ at the last pass of \emph{Bulyan-Lite} ($n = 12$,
$n' = 7$, $\epsilon \approx 0.565$). These thresholds belong to a
hypothetical map of that kind, which the fingerprint is not, and they are not
directly comparable with the fingerprint-space thresholds of
\Cref{sec:theory}. All thresholds are rounded up.

\section{Experimental Setup}
\label{sec:app-setup}

\paragraph{Attacks.} \Cref{sec:experiments} defines the four attacks. The
1-pixel trigger is the single-pixel backdoor of BadNets~\citep{gu2017badnets},
and the 9-pixel trigger, our variant, places nine sparse pixels within a
$5\times3$ bounding box at a fixed location. Both are trained to leave
untriggered inputs correctly classified. Label flipping relabels four
source-target class pairs and the targeted attack one, following Tolpegin et
al.~\citeyearpar{tolpegin2020data}. Neither label attack plants a trigger, so any
effect it has on a client's fingerprint comes through the class-conditional
responses it shifts. Malicious updates are neither boosted nor scaled. These
attacks draw on work on backdoors in centralised
training~\citep{liu2018trojaning, li2020invisible, bagdasaryan2021blind}, their
federated counterparts~\citep{xie2020dba}, and model poisoning with explicit
boosting~\citep{bhagoji2019analyzing}.

\paragraph{Architectures.} The Fashion-MNIST CNN comprises two convolutional
layers with $5\times5$ kernels, mapping from 1 to 16 channels and from 16 to
32, each followed by BatchNorm and a $2\times2$ max-pool, and a fully connected
output layer of size 10. The CIFAR-10 CNN begins with two $3\times3$
convolutional layers, from 3 to 32 channels and then 32 to 32, followed by a
$2\times2$ max-pooling layer, with BatchNorm after every convolution. The
pattern repeats twice more, from 32 to 64 and 64 to 64, then from 64 to 128 and
128 to 128, each followed by a $2\times2$ max-pool, and a fully connected layer
of size 128 precedes the 10-dimensional output layer. ResNet-9 is a residual
network~\citep{he2016deep} of nine layers, an initial convolution followed by
four basic residual blocks of two convolutions each. The parameter dimension
\(d\) of each model counts every floating-point tensor, BatchNorm statistics
included.

\paragraph{Hyperparameters.} Local training runs 10 epochs with a batch size
of 32, using SGD~\citep{bottou2010large} at a learning rate of 0.01 and a
momentum of 0.1. The pattern optimisation of
\Cref{sec:Pseudo Pattern Generation} runs gradient descent with momentum 0.99
at a learning rate of 0.9, for 1000 iterations, on the raw logits rather than
the softmax, with the model in evaluation mode and the patterns of all classes
optimised together. Nothing else bounds it, since the patterns are not
projected back into a valid input range. Each client draws \(N = 10\) patterns
per class. Each component of a class mean has \(1/N\) of the variance of a
single response, so more patterns steady the fingerprint, but the variance
falls only as \(1/N\) while the cost of optimising the patterns grows
linearly in \(N\), and \(N = 10\) proved sufficient for stable detection
across all our experimental settings.

\paragraph{Evaluation protocol.} Benchmark accuracies average the final 10 of
100 rounds, since a single round is a noisy sample and the early rounds precede
convergence. The seed is held fixed, so a repeat re-runs a byte-identical
configuration and the spread between repeats measures non-determinism of
implementation and hardware rather than sampling variability.

\paragraph{Implementation.} Every algorithm and architecture is implemented in
PyTorch~\citep{paszke2019pytorch}, starting from the FLTK
codebase\footnote{\url{https://github.com/bacox/fltk}}. The trimmed mean drops
the four largest and four smallest values of each coordinate at \(n = 10\).
The repeats ran on NVIDIA RTX A4000, A100 and V100 accelerators, so timings are
reported for one accelerator rather than pooled.

\section{Additional Benchmark Results}
\label{sec:app-results}

\Cref{fig:gap-parent} draws the paired attack-success differences of
\Cref{tab:ResultsDelta} one configuration at a time, and \Cref{tab:Admission}
gives the admission counts that \Cref{subsec:results} reports.

\begin{figure}[h]
    \centering
    \includegraphics[width=0.6\linewidth]{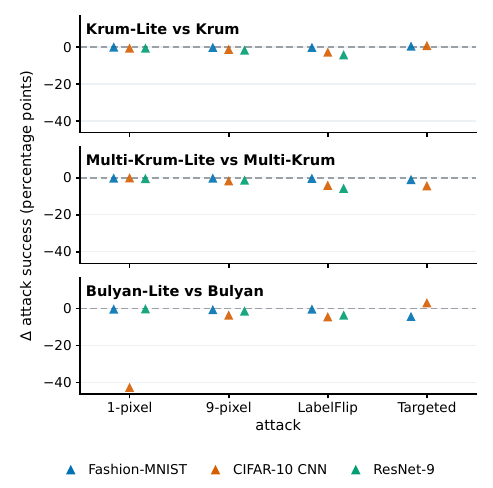}
    \caption{Each variant minus the parameter-distance rule it replaces, in
    attack success, one point per configuration. Negative is better for the
    variant. Only one point of 33 sits materially above zero, and the single
    large negative one is \emph{Bulyan-Lite} against Bulyan under the
    1-pixel trigger on CIFAR-10 with the CNN. Differences are in percentage
    points, the difference between two percentages rather than a relative change.}
    \label{fig:gap-parent}
\end{figure}

\AdmissionTable

\section{Rules That Combine Every Update}
\label{sec:app-nofilter}

\Cref{tab:NoFilterBaselines} reports every cell for the coordinate-wise median
and the trimmed mean, the two references that combine every update without
selecting, which \Cref{subsec:results} summarises. They are kept out of the
main results table, which compares rules that select clients.

\NoFilterBaselines

\section{Repeats That Disagree}
\label{sec:app-repeats}

Three of the 108 cells across the two results tables have repeats spanning
more than 5 points, all on CIFAR-10 with the CNN under a pixel trigger, namely
\emph{Bulyan-Lite} at 9-pixel and Bulyan at both. The 1-pixel gap of
\Cref{subsec:results}, the only claim that turns on such a cell, survives all
three repeats.
\Cref{fig:repeat-grid} draws the three daggered cells one repeat per row, so
what each median averages over can be seen.

\begin{figure*}[t]
    \centering
    \includegraphics[width=\textwidth]{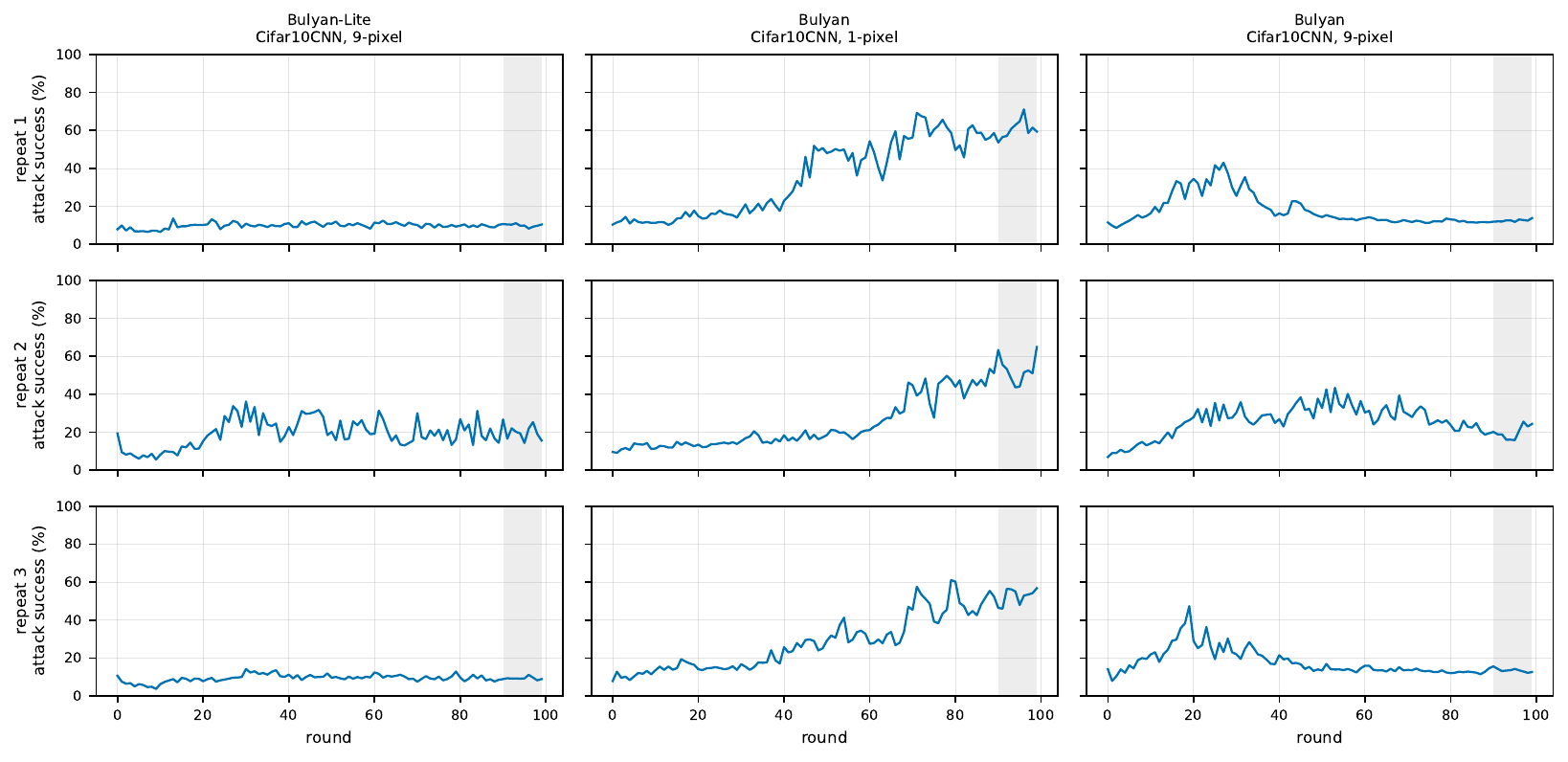}
    \caption{The three cells whose repeats disagree by more than 5 points, all
    on CIFAR-10 with the CNN, one column each and one repeat per row. The
    shaded band is the final 10 rounds. Under the 1-pixel trigger all three
    Bulyan repeats rise above the 40\% threshold, spanning 8.1 points. In the
    \emph{Bulyan-Lite} 9-pixel column the second repeat settles near 20\% where
    the other two stay near 10\%, and in the Bulyan 9-pixel column all three
    rise and fall but at different heights.}
    \label{fig:repeat-grid}
\end{figure*}

\clearpage
\section{Per-Round Behaviour}
\label{sec:app-rounds}

\Cref{fig:rounds-main} picks out the three trajectories that
\Cref{subsec:results} argues from.

\begin{figure*}[t]
    \centering
    \includegraphics[width=\textwidth]{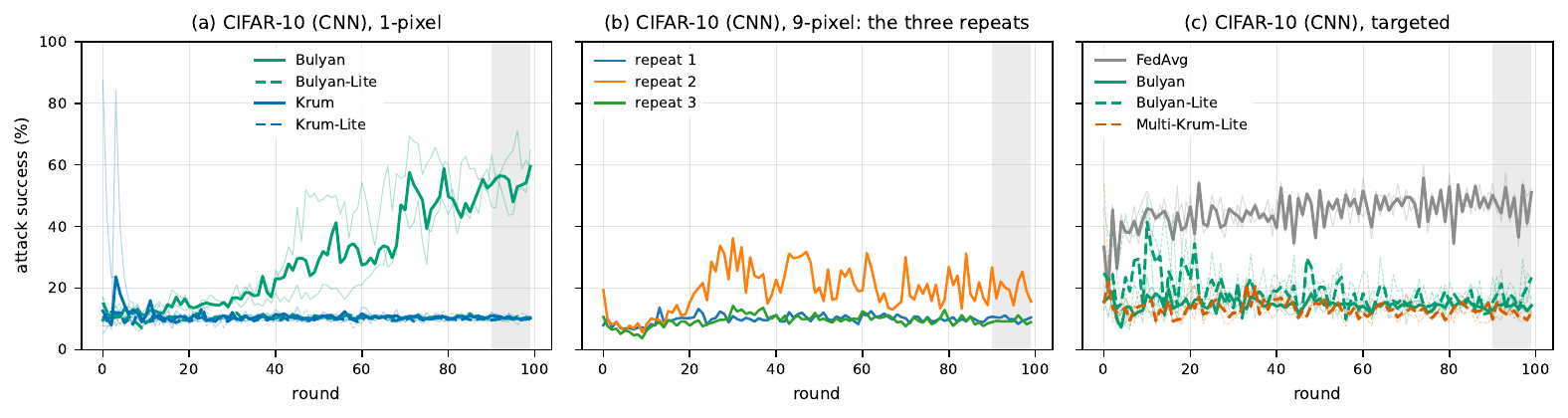}
    \caption{The three trajectories that \Cref{subsec:results} argues from,
    all on CIFAR-10 with the CNN. (a) Under the 1-pixel trigger Bulyan climbs
    from round forty onward while \emph{Bulyan-Lite} stays flat, the
    42.6-point gap of \Cref{subsec:results}. The two Krum lines are flat
    throughout. (b) \emph{Bulyan-Lite} under the 9-pixel trigger, one line per
    repeat, the cell \Cref{fig:repeat-grid} also draws. (c) The targeted
    attack, with undefended averaging, the Bulyan pair and
    \emph{Multi-Krum-Lite}, where \emph{Bulyan-Lite} ends above its parent.
    Solid lines are averaging and the parameter-distance rules, dashed the
    fingerprint variants. In (a) and (c) each line is the median across the
    three repeats, drawn over the repeats themselves, and (b) draws the repeats
    alone. The shaded band on the right is the final 10 rounds, which every
    benchmark table averages.}
    \label{fig:rounds-main}
\end{figure*}

\Cref{fig:rounds-pixel,fig:rounds-label} draw all 100 rounds of every
configuration, with the averaged final 10 rounds shaded. Rows are
architectures, and each attack contributes a column of clean accuracy and a
column of attack success. Solid lines are the parameter-distance rules and
dashed lines their fingerprint variants in the same hue, so each comparison
sits within one panel. Each line is the median over the three repeats, and the
band around a variant spans them.

\begin{figure*}[p]
    \centering
    \includegraphics[width=\textwidth]{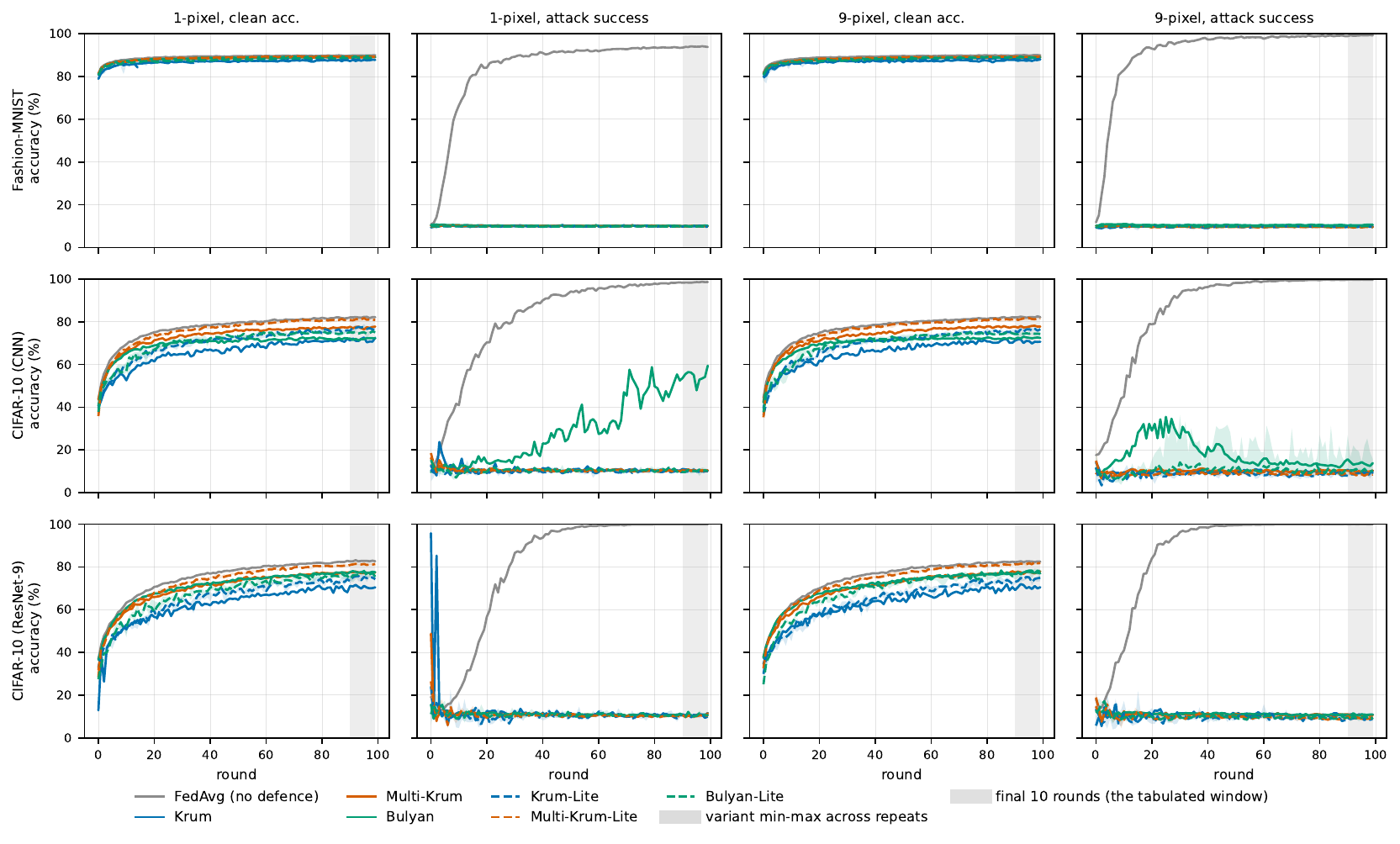}
    \caption{Per-round clean accuracy and attack success under the two pixel
    triggers. Federated averaging exceeds 93\% attack success in every panel,
    and every selection rule holds the attack near the 10\% chance floor
    except Bulyan in the CIFAR-10 CNN row. Under the 1-pixel trigger Bulyan
    climbs from round forty onward, to 52.79\% over the tabulated window, while
    \emph{Bulyan-Lite} stays at 10.18\%. Under the 9-pixel trigger Bulyan rises
    and falls back within the run, averaging 13.56\% over the window against
    10.04\% for \emph{Bulyan-Lite}, whose band there, the widest across-repeat
    spread in the grid, \Cref{fig:repeat-grid} draws per repeat.}
    \label{fig:rounds-pixel}
\end{figure*}

\begin{figure*}[p]
    \centering
    \includegraphics[width=\textwidth]{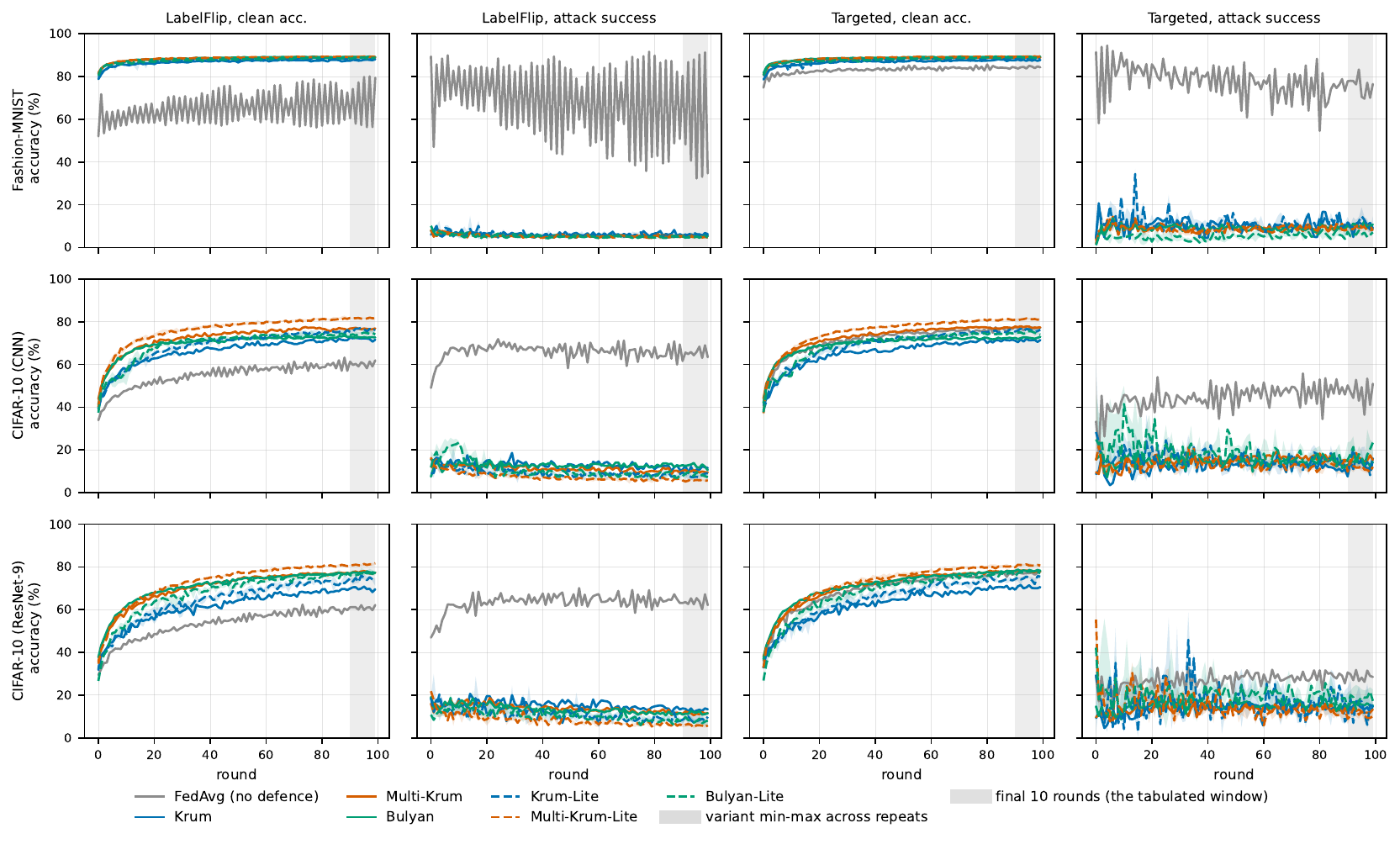}
    \caption{Per-round clean accuracy and attack success under the two label
    attacks, both weaker undefended than the pixel triggers. Federated
    averaging reaches 62 to 65\% under label flipping and 29 to 75\% under the
    targeted attack, and every selection rule holds both. The ResNet-9
    targeted panel is the configuration excluded as non-discriminative, where
    undefended averaging reaches only 29.1\%.}
    \label{fig:rounds-label}
\end{figure*}

\clearpage
\section{Ablation Details}
\label{sec:app-ablation}

The five measures of \Cref{subsec:ablation} are computed as follows. The
flattened client update counts floating-point tensors only. The Gaussian
projection is drawn once and shared by every client and round. The
penultimate features are averaged over the pseudo-patterns, and the softmax
measure applies a softmax to the fingerprint before comparing. The penultimate
width \(h\) is 1{,}568, 128 and 512 for the three architectures. AUC is
computed as the rank-sum statistic with ties credited one half. It scores the
whole ordering, whereas precision@$m$ looks only at the $m$ clients that would
be selected. \Cref{tab:ablation-precision-p3} reports precision@$m$,
\Cref{fig:ablation-measures} draws three of the five measures, and
\Cref{tab:ablation-spread-p3} lists the cells whose AUC varies most across
repeats, with their three values.

The softmax trails the fingerprint in every cell, at $0.551$ to $0.953$ AUC
(\Cref{tab:ablation-auc-p3}), and
the penultimate representation in five of six while tying the sixth, at
$0.739$ to $1.000$. The fingerprint and the parameter distance each vary by at
most $0.008$ AUC across their three repeats, while the random projection, the
penultimate representation and the softmax each exceed the $0.05$ flag in
three of six cells.

\AblationAUC

\AblationPrecision

\AblationSpread

\begin{figure}[t]
    \centering
    \includegraphics[width=\linewidth]{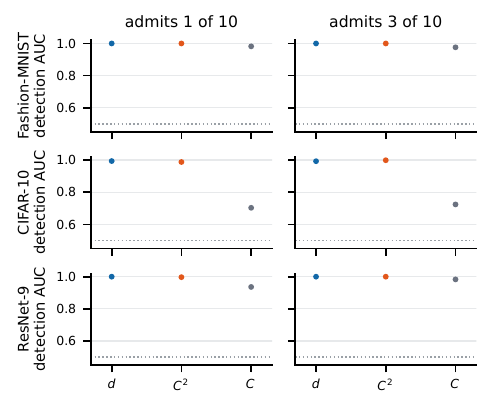}
    \caption{Detection AUC of three of the five measures, per architecture and
    per selection size, namely the full parameter vector
    ($d$), the logit fingerprint ($C^{2}$) and the random projection ($C$).
    Each point is the median over three repeats of the same ranking problem.
    The axis starts at $0.45$, below the dotted line at $0.5$, which is
    chance. The penultimate and softmax measures are in
    \Cref{tab:ablation-auc-p3}.}
    \label{fig:ablation-measures}
\end{figure}

\clearpage
\section{Selection Cost in Practice}
\label{sec:app-wallclock}

\begin{proof}[Proof of \Cref{lem:complexity}]
The \(\binom{n}{2} = O(n^{2})\) pairwise distances need computing only once,
each over the \(C^{2}\) values of two fingerprints, and are reused in every
pass. Each pass sorts every remaining client's distances, at \(O(n \log n)\)
per client and \(O(n^{2} \log n)\) in all, and there are \(m\) passes.
Replacing \(C^{2}\) by \(d\) gives the parameter-distance bound.
\end{proof}

The cost claim of this work is the count of elements that selection compares,
\(C^{2} = 100\) per pair of clients against \(d\), the quantity the
substitution changes directly. \Cref{tab:wallclock} reports the wall clock
alongside it. Over one round of 10 clients on an RTX A4000, selection on
fingerprints is 1.9, 5.2 and 8.2 times faster than Krum on the three
architectures, two to four orders of magnitude less than the element ratios.
The implementation computes each of the 45 pairwise distances in both
directions, 90 comparisons in all and twice what \Cref{lem:complexity}
requires, in both arms alike. At this scale a comparison is dominated by a
fixed per-call cost rather than by the elements it touches. Krum spends
22.0\,ns per element on the Fashion-MNIST CNN and selection on fingerprints
3{,}297\,ns, the same overhead spread over far fewer elements. The element
count does not depend on the implementation, which is why the claim is stated
in it. \Cref{tab:overhead} gives the per-round overhead that
\Cref{subsec:cost} summarises, the cost the client pays included.

\SelectionWallClock

\OverheadTable

\end{document}